\documentclass[manuscript,authorversion]{acmart}

\usepackage{amsmath}
\usepackage{amsfonts}
\usepackage{amsthm}
\usepackage{color}
\usepackage{enumerate}

\newtheorem{assumption}{Assumption}
\newtheorem{corollary}{Corollary}
\newtheorem{example}{Example}
\newtheorem{proposition}{Proposition}
\newtheorem{lemma}{Lemma}

\theoremstyle{definition}
\newtheorem{remark}{Remark}
\newcommand{\argmax}{\mathop{\rm argmax}\limits}

\usepackage{microtype}
\usepackage{graphicx}
\usepackage{subfigure}
\usepackage{booktabs} 

\AtBeginDocument{%
  \providecommand\BibTeX{{%
    \normalfont B\kern-0.5em{\scshape i\kern-0.25em b}\kern-0.8em\TeX}}}




\copyrightyear{2021} 
\acmYear{2021} 
\setcopyright{acmlicensed}\acmConference[RecSys '21]{Fifteenth ACM Conference on Recommender Systems}{September 27-October 1, 2021}{Amsterdam, Netherlands}
\acmBooktitle{Fifteenth ACM Conference on Recommender Systems (RecSys '21), September 27-October 1, 2021, Amsterdam, Netherlands}
\acmPrice{15.00}
\acmDOI{10.1145/3460231.3474231}
\acmISBN{978-1-4503-8458-2/21/09}



\begin{document}

\title{Debiased Off-Policy Evaluation for Recommendation Systems}

\author{Yusuke Narita}
\affiliation{%
  \institution{Yale University}
  \streetaddress{37 Hillhouse Avenue}
  \city{New Haven}
  \state{CT}
  \country{USA}
  \postcode{06511}
}
\email{yusuke.narita@yale.edu}

\author{Shota Yasui}
\affiliation{%
  \institution{CyberAgent, Inc.}
  \streetaddress{}
  \city{Tokyo}
  \country{Japan}
 }
\email{yasui_shota@cyberagent.co.jp}

\author{Kohei Yata}
\affiliation{%
  \institution{Yale University}
  \streetaddress{27 Hillhouse Avenue}
  \city{New Haven}
  \state{CT}
  \country{USA}
  \postcode{06511}
}
\email{kohei.yata@yale.edu}


\begin{abstract}
  Efficient methods to evaluate new algorithms are critical for improving interactive bandit and reinforcement learning systems such as recommendation systems. A/B tests are reliable, but are time- and money-consuming, and entail a risk of failure. In this paper, we develop an alternative method, which predicts the performance of algorithms given historical data that may have been generated by a different algorithm. Our estimator has the property that its prediction converges in probability to the true performance of a counterfactual algorithm at a rate of $\sqrt{N}$, as the sample size $N$ increases. We also show a correct way to estimate the variance of our prediction, thus allowing the analyst to quantify the uncertainty in the prediction. These properties hold even when the analyst does not know which among a large number of potentially important state variables are actually important. We validate our method by a simulation experiment about reinforcement learning. We finally apply it to improve advertisement design by a major advertisement company. We find that our method produces smaller mean squared errors than state-of-the-art methods.
\end{abstract}

\begin{CCSXML}
<ccs2012>
   <concept>
       <concept_id>10002951.10003260.10003272</concept_id>
       <concept_desc>Information systems~Online advertising</concept_desc>
       <concept_significance>500</concept_significance>
       </concept>
   <concept>
       <concept_id>10010147.10010257.10010282.10010292</concept_id>
       <concept_desc>Computing methodologies~Learning from implicit feedback</concept_desc>
       <concept_significance>500</concept_significance>
       </concept>
 </ccs2012>
\end{CCSXML}

\ccsdesc[500]{Information systems~Online advertising}
\ccsdesc[500]{Computing methodologies~Learning from implicit feedback}
\keywords{ad design, off-policy evaluation, bandit, reinforcement learning}

\maketitle

\section{Introduction}
Interactive bandit and reinforcement learning (RL) systems (e.g., ad/news/recommendation/search platforms, personalized education and medicine) produce log data valuable for evaluating and redesigning the systems.
For example, the logs of a news recommendation system record which news article was presented and whether the user read it, giving the system designer a chance to make its recommendation more relevant \cite{li2010contextual}. 

Exploiting log data is, however, more difficult than conventional supervised machine learning: 
the result of each log is only observed for the action chosen by the system (e.g., the presented news) but not for all the other actions the system could have taken. 
Moreover, the log entries are biased in that the logs over-represent actions favored by the system. 

A potential solution to this problem is an A/B test that compares the performance of counterfactual systems. 
However, A/B testing counterfactual systems is often technically or managerially infeasible, since deploying a new policy is time- and money-consuming, and entails a risk of failure.

This leads us to the problem of \textit{counterfactual (off-policy, offline) evaluation}, where one aims to use batch data collected by a logging policy to estimate the value of a counterfactual policy or algorithm without deploying it. 
Such evaluation allows us to compare the performance of counterfactual policies to decide which policy should be deployed in the field. 
This alternative approach thus solves the above problem with the naive A/B test approach.
Key prior studies include \citet{Strehl2010,li2010contextual,li2011unbiased,Dudik2014,wang2016optimal,swaminathan2017off, narita2018efficient,gilotte2018offline} for bandit algorithms, and \citet{precup2000eligibility, Jiang16, Thomas16, liu2018representation, Farajtabar2018MoreRD, Irpan2019OffPolicyEV, kallus2019DRL, Uehara2020MWL} for RL algorithms.


\textbf{Method.} 
For off-policy evaluation with log data of RL feedback, this paper develops and empirically implements a novel technique with desirable theoretical properties. 
To do so, we consider a class of RL algorithms, including contextual bandit algorithms as important special cases. 
This class includes most of the widely-used algorithms such as (deep) Q-learning, Actor Critic, contextual $\epsilon$-greedy, and Thompson Sampling, as well as their non-contextual analogs and random A/B testing. 
We allow the logging policy to be an unknown function of numerous potentially important state variables. This feature is salient in real-world applications.
We also allow the evaluation target policy to be degenerate, again a key feature of real-life situations. 

We consider an offline estimator for the expected reward from a counterfactual policy. 
Our estimator integrates a well-known Doubly Robust estimator (\cite{rotnitzky1995semiparametric} and modern studies cited above) with ``Double/Debiased Machine Learning'' developed in econometrics and statistics \cite{Chernozhukov2018, Chernozhukov2018LR}.
Building upon these prior studies, we show that
our estimator is ``$\sqrt{N}$-consistent'' in the sense that its prediction converges in probability to the true performance of a counterfactual policy at a rate of $1/\sqrt{N}$ as the sample size $N$ increases. 
Our estimator is also shown to be ``asymptotically normal,'' meaning that it has an approximately normal distribution as $N$ gets large.
We also provide a consistent estimator of its asymptotic variance, thus allowing for the measurement of statistical uncertainty in our prediction. 
Moreover, for special cases in which data are generated by contextual bandit algorithms, our estimator has the lowest variance in a wide class of estimators, achieving variance reduction relative to standard estimators.
Importantly, these properties hold even when the analyst does not know which among a large number of potentially important state variables are actually important.


\textbf{Simulation Experiment.} 
We evaluate the performance of our estimator by conducting an experiment in a slightly different version of the OpenAI Gym CartPole-v0 environment \cite{brockman2016openai}. In this version, there are many more state variables than the original one and some of them are irrelevant to the reward.
In this challenging environment, our estimator produces smaller mean squared errors than widely-used benchmark methods (Doubly Robust estimator in the spirit of \citet{Jiang16} and \citet{Thomas16} and Inverse Probability Weighting estimator).

\textbf{Real-World Experiment.} We empirically apply our estimator to evaluate and optimize the design of online advertisement formats. 
Our application is based on proprietary data provided by CyberAgent Inc., the second largest Japanese advertisement company with about 5 billion USD market capitalization (as of February 2020). 
This company uses randomly chosen bandit algorithms to determine the visual design of advertisements assigned to users.
This A/B test of randomly choosing an algorithm produces logged data and the ground truth for the performance of alternative algorithms. 
We use this data to examine the performance of our proposed method. We use the log data from an algorithm to predict the click through rates (CTR) of another algorithm, and assess the accuracy of our prediction by comparing it with the ground truth. 
This exercise shows that
our estimator produces smaller mean squared errors than widely-used benchmark methods (Doubly Robust estimator in the spirit of \citet{Dudik2014} and Inverse Probability Weighting estimator). 
This improvement is statistically significant at the 5\% level. 
Importantly, this result holds regardless of whether we know the data-generating logging policy or not, 
which shows that our estimator can substantially reduce bias and uncertainty we face in real-world decision-making. 

This improved performance motivates us to use our estimator to optimize the advertisement design for maximizing the CTR. 
We estimate how much the CTR would be improved by a counterfactual policy of choosing the best action (advertisement) for each context (user characteristics).
This exercise produces the following bottom line: Our estimator predicts the hypothetical policy to statistically significantly improve the CTR by 30\% (compared to the logging policy) in one of the three campaigns we analyze. 
Our approach thus generates valuable managerial conclusions.

\section{Setup}
\subsection{Data Generating Process}
We consider historical data from a Markov Decision Process (MDP) as a mathematical description of RL and bandit algorithms.
An MDP is given by ${\mathcal M}=\langle {\mathcal S},{\mathcal A},P_{S_0},P_S,P_R\rangle$, where $\mathcal{S}$ is the state space, $\mathcal{A}$ is the action space, $P_{S_0}: {\mathcal S}\rightarrow [0,1]$ is the initial state distribution, $P_S: {\mathcal S}\times {\mathcal A}\rightarrow \Delta({\mathcal S})$ is the transition function with $P_S(s'|s,a)$ being the probability of seeing state $s'$ after taking action $a$ given state $s$, and $P_R:{\mathcal S}\times {\mathcal A}\times \mathbb{R}\rightarrow [0,1]$ be the conditional distribution of the immediate reward with $P_R(\cdot|s,a)$ being the immediate reward distribution conditional on the state and action being $(s,a)$.
We assume that the state and action spaces ${\mathcal S}$ and ${\mathcal A}$ are finite.
Given $P_R$, we define the mean reward function $\mu:{\mathcal S}\times {\mathcal A}\rightarrow \mathbb{R}$ as $\mu(s,a)=\int r dP_R(r|s,a)$.

We call a function $\pi: {\mathcal S}\rightarrow \Delta({\mathcal A})$ a {\it policy}, which assigns each state $s\in {\mathcal S}$ a distribution over actions, where $\pi(a|s)$ is the probability of taking action $a$ when the state is $s$.
Let $H=(S_0,A_0,R_0,...,S_T,A_T,R_T)$ be a {\it trajectory}, where $S_t$, $A_t$, and $R_t$ are the state, the action, and the reward in step $t$, respectively, and $T$ denotes the last step and is fixed.
We say that a trajectory $H$ is generated by a policy $\pi$, or $H\sim \pi$ in short if $H$ is generated by repeating the following process for $t=0,...,T$:
(1) When $t=0$, the state $S_t$ is drawn from the initial distribution $P_{S_0}$.
When $t\ge 1$, $S_t$ is determined based on the transition function $P_S(\cdot|S_{t-1},A_{t-1})$.
(2) Given $S_t$, the action $A_t$ is randomly chosen based on $\pi(\cdot|S_t)$.
(3) The reward $R_t$ is drawn from the conditional reward distribution $P_R(\cdot|S_t,A_t)$.

Suppose that we observe {\it historical data} $\{H_i\}_{i=1}^N$ where trajectories are independently generated by a fixed {\it behavior policy} $\pi_b$, i.e., $H_i\sim \pi_b$ independently across $i$. 
The historical data is a collection of i.i.d. trajectories. 
Importantly, we allow the components of the data generating process ${\mathcal M}$ and $\pi_b$ to vary with the sample size $N$.
Specifically, let ${\mathcal M}_N$ and $\pi_{bN}$ be the MDP and the behavior policy, respectively, when the sample size is $N$, and let $\mathbb{P}_N$ denote the resulting probability distribution of $H_i$.
$\mathbb{P}_N$ is allowed to vary with $N$ in a way that the functions $P_{S_0N}$, $P_{RN}$ and $\pi_{bN}$ are high dimensional relative to sample size $N$ even when $N$ is large. 
In some RL problems, for example, there are a large number of possible states.
To capture the feature that the number of states $|{\mathcal S}_N|$ is potentially large relative to sample size $N$, we may consider a sequence of $\mathbb{P}_N$ such that $|{\mathcal S}_N|$ is increasing with $N$.
For the sake of notational simplicity, we make implicit the dependence of ${\mathcal M}$ and $\pi_{b}$ on $N$. 

We assume that we know the state space ${\mathcal S}$ and the action space ${\mathcal A}$ but know none of the functions $P_{S_0}$, $P_S$ and $P_R$.
In some environments, we know the function $\pi_b$ or observe the probability vector $(\pi_b(a|S_{it}))_{a\in {\mathcal A}, t=0,...,T}$ for every trajectory $i$ in the historical data.
Our approach is usable regardless of the availability of such knowledge on the behavior policy.

\subsection{Prediction Target}

With the historical data $\{H_i\}_{i=1}^N$, we are interested in estimating the discounted value of the {\it evaluation policy} $\pi_e$, which might be different from $\pi_b$: with $\gamma\in [0,1]$ as the discount factor,
$$
V^{\pi_e}\coloneqq\mathbb{E}_{H\sim \pi_e}\left[\sum_{t=0}^T \gamma^t R_t\right].
$$

\section{Estimator and Its Properties}\label{section:estimation}
The estimation of $V^{\pi_e}$ involves estimation of the behavior policy $\pi_b$ (if unknown), the transition function $P_S$, and the mean reward function $\mu$.
These functions may be high dimensional in the sense of having a large number of arguments, possibly much larger than the number of trajectories $N$.
To handle the issue, we use Double/Debiased Machine Learning (DML) by \citet{Chernozhukov2018}.
DML is a general method for estimation and inference of semiparametric models in the presence of a high-dimensional vector of control variables, and is characterized by two key features, \textit{cross-fitting} and \textit{Neyman orthogonality}, which we will discuss in detail later.
These two features play a role in reducing the bias that may arise due to the estimation of high-dimensional parameters, which makes DML suitable for the off-policy evaluation problem with potentially high dimensions of $\pi_b$, $P_S$ and $\mu$.

Before presenting our estimator, we introduce some notation.
$H_t^{s,a}=(S_0,A_0,...,S_t,A_t)$ is a trajectory of the state and action up to step $t$. $\rho_t^{\pi_e}: ({\mathcal S}\times {\mathcal A})^{t+1}\rightarrow \mathbb{R}_+$ is the importance weight function: 
\begin{align}
\rho_t^{\pi_e}(H_t^{s,a})\coloneqq \prod_{t'=0}^t \frac{\pi_e(A_{t'}|S_{t'})}{\pi_b(A_{t'}|S_{t'})}\label{eq:weight}.
\end{align}
This equals the probability of $H$ up to step $t$ under the evaluation policy $\pi_e$ divided by its probability under the behavior policy $\pi_b$.
Viewing $\rho_t^{\pi_e}$ as a function of $\pi_b$, define $\rho_t^{\pi_e}(H_t^{s,a};\tilde \pi_b)$ as the value of $\rho_t^{\pi_e}(H_t^{s,a})$ with the true behavior policy $\pi_b$ replaced with a candidate function $\tilde \pi_b$:
$$\rho_t^{\pi_e}(H_t^{s,a};\tilde \pi_b)\coloneqq \prod_{t'=0}^t\frac{\pi_e(A_{t'}|S_{t'})}{\tilde \pi_b(A_{t'}|S_{t'})}.$$
We can think of $\rho_t^{\pi_e}(H_t^{s,a};\tilde \pi_b)$ as the estimated importance weight function when we use $\tilde \pi_b$ as the estimate of $\pi_b$.
By definition, $\rho_t^{\pi_e}(H_t^{s,a};\pi_b)=\rho_t^{\pi_e}(H_t^{s,a})$, where the left-hand side is $\rho_t^{\pi_e}(H_t^{s,a};\tilde \pi_b)$ evaluated at the true $\pi_b$ and the right-hand side is the true importance weight function.
Finally, let $q_t^{\pi_e}:{\mathcal S}\times {\mathcal A}\rightarrow \mathbb{R}$ be the state-action value function under policy $\pi_e$ at step $t$, where $q_t^{\pi_e}(s,a)\coloneqq\mathbb{E}_{H\sim \pi_e}[\sum_{t'=t}^T\gamma^{t'-t}R_{t'}|S_t=s,A_t=a]$.
Using $P_S$, $\mu$ and $\pi_e$, $q_t^{\pi_e}$ can be obtained recursively: let 
\begin{align}
q_T^{\pi_e}(s,a) &= \mu(s,a) \label{eq:Q_T}
\end{align}
and for $t=0,...,T-1$,
\begin{align}
q_t^{\pi_e}(s,a)
=\mu(s,a)+\gamma\sum_{(s',a')}P_S(s'|s,a)\pi_e(a'|s')q_{t+1}^{\pi_e}(s',a'). \label{eq:Q_t}
\end{align}

Our estimator is based on the following expression of $V^{\pi_e}$ \cite{Thomas16}:
\begin{align}
V^{\pi_e}=&~\mathbb{E}_{H\sim \pi_b}\left[\psi(H;\pi_b,\{q_t^{\pi_e}\}_{t=0}^T)\right] \label{eq:value},
\end{align}
where for any candidate tuple $\tilde \eta=(\tilde \pi_b,\{\tilde q_t^{\pi_e}\}_{t=0}^T)$, we define
\begin{align}
\psi(H;\tilde \eta)
=&\sum_{t=0}^T \gamma^t \Bigl\{\rho_t^{\pi_e}(H_{t}^{s,a};\tilde \pi_b)(R_{t}-\tilde q_t^{\pi_e}(S_{t},A_{t}))\Bigr.\Bigl.+\rho_{t-1}^{\pi_e}(H_{t-1}^{s,a};\tilde \pi_b)\sum_{a\in{\mathcal A}}\pi_e(a|S_{t})\tilde q_t^{\pi_e}(S_{t},a)\Bigr\},\label{eq:DR}
\end{align}
where $\rho_{-1}^{\pi_e}=1$.
To give an intuition behind the expression, we arrange the terms as follows:
\begin{align*}
\psi(H;\tilde \eta) =& \sum_{t=0}^T \gamma^t \rho_t^{\pi_e}(H_{t}^{s,a};\tilde \pi_b)R_{t} + \sum_{t=0}^T \gamma^t \Bigl\{-\rho_t^{\pi_e}(H_{t}^{s,a};\tilde \pi_b) \tilde{q}_t^{\pi_e}(S_t,A_t) \Bigr.+ \Bigl.\rho_{t-1}^{\pi_e}(H_{t-1}^{s,a};\tilde \pi_b) \sum_{a\in{\mathcal A}}\pi_e(a|S_{t})\tilde q_t^{\pi_e}(S_{t},a)\Bigr\}.
\end{align*}
The first term is the well-known Inverse Probability Weighting (IPW) estimator.
The second term serves as a control variate that has zero mean as long as we plug in the true $\pi_b$; the mean of the term remains zero regardless of the function we plug in for $\{\tilde{q}_t^{\pi_e}\}_{t=0}^T$.
This is the key to make our estimator insensitive to $\{\tilde{q}_t^{\pi_e}\}_{t=0}^T$.
We construct our estimator as follows.
\begin{enumerate}
	\item Take a $K$-fold random partition $(I_k)_{k=1}^K$ of trajectory indices $\{1,...,N\}$ such that the size of each fold $I_k$ is $n=N/K$.
	Also, for each $k=1,...,K$, define $I_k^c\coloneqq \{1,...,N\}\setminus I_k$.
	\item For each $k=1,...,K$, construct estimators $\hat \pi_{b,k}$ (if $\pi_b$ is unknown), $\hat\mu_{k}$ and $\hat P_{S,k}$ of $\pi_b$, $\mu$ and $P_S$ using the subset of data $\{H_i\}_{i\in I_k^c}$.
	We then construct estimator $\{\hat q^{\pi_e}_{t,k}\}_{t=0}^T$ of $\{q^{\pi_e}_t\}_{t=0}^T$ by plugging $\hat\mu_{k}$ and $\hat P_{S,k}$ into the recursive formulation (\ref{eq:Q_T}) and (\ref{eq:Q_t}).
	\item Given $\hat\eta_k=(\hat\pi_{b,k},\{\hat q_{t,k}^{\pi_e}\}_{t=0}^T)$, $k=1,...,K$, the DML estimator $\hat V^{\pi_e}_{{\rm DML}}$ is given by
	\begin{align*}
	\hat V^{\pi_e}_{{\rm DML}}
	=&\frac{1}{K}\sum_{k=1}^K \frac{1}{n}\sum_{i\in I_k}\psi(H_{i};\hat \eta_k)\\
	=& \frac{1}{K}\sum_{k=1}^K \frac{1}{n}\sum_{i\in I_k}\sum_{t=0}^T \gamma^t \Bigl\{\rho_t^{\pi_e}(H_{it}^{s,a};\hat \pi_{b,k})(R_{it}-\hat q_{t,k}^{\pi_e}(S_{it},A_{it}))
	\Bigr.+\Bigl.\rho_{t-1}^{\pi_e}(H_{it-1}^{s,a};\hat \pi_{b,k})\sum_{a\in{\mathcal A}}\pi_e(a|S_{it})\hat q_{t,k}^{\pi_e}(S_{it},a)\Bigr\}.
	\end{align*}
\end{enumerate}
Possible estimation methods for $\pi_b$, $\mu$ and $P_S$ in Step 2 are (i) classical nonparametric methods such as kernel and series estimation, (ii) off-the-shelf machine learning methods such as random forests, lasso, neural nets, and boosted regression trees, and (iii) existing methods developed in the off-policy policy evaluation literature such as representation balancing MDPs \cite{liu2018representation}.
These methods, especially (ii) and (iii), are usable even when the analyst does not know which among a large number of potentially important state variables are actually important.
This DML estimator differs from the Doubly Robust (DR) estimator developed by \citet{Jiang16} and \citet{Thomas16} in that we use the cross-fitting procedure, as explained next. 

{\bf A. Cross-Fitting.}
The above method uses a sample-splitting procedure called {\it cross-fitting}, where we split the data into $K$ folds, take the sample analogue of Eq. (\ref{eq:value}) using one of the folds ($I_k$) with $\pi_b$ and $\{ q_t^{\pi_e}\}_{t=0}^T$ estimated from the remaining folds ($I_k^c$) plugged in, and average the estimates over the $K$ folds to produce a single estimator.
Cross-fitting has two advantages.
First, if we use instead the whole sample both for estimating $\pi_b$ and $\{ q_t^{\pi_e}\}_{t=0}^T$ and for computing the final estimate of $V^{\pi_e}$ (the ``full-data'' variant of the DR estimator of \citet{Thomas16}), substantial bias might arise due to overfitting \cite{Chernozhukov2018,newey2018crossfitting}.
Cross-fitting removes the potential bias by making the estimate of $V^{\pi_e}$ independent of the estimates of $\pi_b$ and $\{ q_t^{\pi_e}\}_{t=0}^T$.

Second, a standard sample splitting procedure uses a half of the data to construct estimates of $\pi_b$ and $\{ q_t^{\pi_e}\}_{t=0}^T$ and the other half to compute the estimate of $V^{\pi_e}$ (the DR estimator of \citet{Jiang16} and the ``half-data'' variant of the DR estimator of \citet{Thomas16}). 
In contrast, cross-fitting swaps the roles of the main fold ($I_k$) and the rest ($I_k^c$) so that all trajectories are used for the final estimate, which enables us to make efficient use of data.

{\bf B. Neyman Orthogonality.}
There is another key ingredient important for DML to have desirable properties.
The DML estimator is constructed by plugging in the estimates of $\pi_b$ and $\{ q_t^{\pi_e}\}_{t=0}^T$, which may be biased due to regularization if they are estimated with machine learning methods.
However, the DML estimator is robust to the bias, since $\psi$ satisfies the {\it Neyman orthogonality} condition \cite{Chernozhukov2018}. 
The condition requires that for any candidate tuple $\tilde \eta=(\tilde \pi_b,\{\tilde q_t^{\pi_e}\}_{t=0}^T)$,
$$
\left. \frac{\partial \mathbb{E}_{H\sim \pi_b}[\psi(H;\eta+r(\tilde \eta-\eta))]}{\partial r}\right|_{r=0}=0,
$$
where $\eta=(\pi_b,\{q_t^{\pi_e}\}_{t=0}^T)$ is the tuple of the true functions (see Appendix \ref{proof} for the proof that DML satisfies this). 
Intuitively, the Neyman orthogonality condition means that the right-hand side of Eq. (\ref{eq:value}) is locally insensitive to the value of $\pi_b$ and $\{q_t^{\pi_e}\}_{t=0}^T$.
More formally, the first-order approximation of the bias caused by using $\tilde \eta$ instead of the true $\eta$ is given by
\begin{align*}
\mathbb{E}_{H\sim \pi_b}[\psi(H;\tilde \eta))] -\mathbb{E}_{H\sim \pi_b}[\psi(H;\eta)]
\approx\left. \frac{\partial \mathbb{E}_{H\sim \pi_b}[\psi(H;\eta+r(\tilde \eta-\eta))]}{\partial r}\right|_{r=0},
\end{align*}
which is exactly zero by the Neyman orthogonality condition.
As a result, plugging in noisy estimates of $\pi_b$ and $\{q_t^{\pi_e}\}_{t=0}^T$ does not strongly affect the estimate of $V^{\pi_e}$.

In contrast, IPW is based on the following expression of $V^{\pi_e}$: $V^{\pi_e}=\mathbb{E}_{H\sim \pi_b}[\psi_{{\rm IPW}}(H;\pi_b)]$, where for any candidate $\tilde \pi_b$, we define
\begin{align}
\psi_{{\rm IPW}}(H;\tilde \pi_b)=\sum_{t=0}^T \gamma^t \rho_t^{\pi_e}(H_{t}^{s,a};\tilde \pi_b)R_{t}. \label{eq:IPW}
\end{align}
The Neyman orthogonality condition does not hold for IPW: for some $\tilde \pi_b\neq \pi_b$,
$
\left. \frac{\partial \mathbb{E}_{H\sim \pi_b}[\psi_{{\rm IPW}}(H;\pi_b+r(\tilde \pi_b-\pi_b))]}{\partial r}\right|_{r=0}\neq0.
$
Therefore, IPW is not robust to the bias in the estimate of $\pi_b$. 

\subsection{$\sqrt{N}$-consistency and Asymptotic Normality}
Let $
\sigma^2 =\mathbb{E}_{H\sim \pi_b}[(\psi(H;\eta)-V^{\pi_e})^2]$ 
be the variance of $\psi(H;\eta)$.
To derive the properties of the DML estimator, we make the following assumption.

\begin{assumption}\label{as:regularity}
	\begin{enumerate}
		\renewcommand{\theenumi}{(\alph{enumi})}
		\renewcommand{\labelenumi}{(\alph{enumi})}
		\item \label{as:variance} There exists a constant $c_0>0$ such that $c_0\le \sigma^2 <\infty$ for all $N$.
		\item \label{as:nuisance-est} For each $k=1,...,K$, the estimator $\hat \eta_k=(\hat \pi_{b,k},\{\hat q_{t,k}^{\pi_e}\}_{t=0}^T)$ belongs to a set ${\mathcal T}_N$ with probability approaching one, where ${\mathcal T}_N$ contains the true $\eta=(\pi_b,\{q_{t}^{\pi_e}\}_{t=0}^T)$ and satisfies the following:
		\begin{enumerate}[(i)]
			\item \label{as:moment} There exist constants $q>2$ and $c_1>0$ such that
			$$\sup_{\tilde \eta\in {\mathcal T}_N}\left(\mathbb{E}_{H\sim \pi_b}[(\psi(H;\tilde\eta)-V^{\pi_e})^q]\right)^{1/q}\le c_1$$
			for all $N$.
			\item \label{as:dif}
			$
			\sup_{\tilde \eta\in {\mathcal T}_N}\left(\mathbb{E}_{H\sim \pi_b}[(\psi(H;\tilde \eta)-\psi(H;\eta))^2]\right)^{1/2} = o(1).
			$
			\item \label{as:second_d}
			$
			\sup_{r\in(0,1),\tilde \eta\in {\mathcal T}_N} \left|\tfrac{\partial^2 \mathbb{E}_{H\sim \pi_b}[\psi(H;\eta+r(\tilde \eta-\eta))]}{\partial r^2}\right|=o(1/\sqrt{N}).
			$
		\end{enumerate}
	\end{enumerate}
\end{assumption}

Assumption \ref{as:regularity} \ref{as:variance} assumes that the variance of $\psi(H;\eta)$ is nonzero and finite.
Assumption \ref{as:regularity} \ref{as:nuisance-est} states that the estimator $(\hat \pi_{b,k},\{\hat q_{t,k}^{\pi_e}\}_{t=0}^T)$ belongs to the set ${\mathcal T}_N$, a shrinking neighborhood of the true $(\pi_b,\{q_{t}^{\pi_e}\}_{t=0}^T)$, with probability approaching one.
It requires that $(\hat \pi_{b,k},\{\hat q_{t,k}^{\pi_e}\}_{t=0}^T)$ converge to $(\pi_b,\{q_{t}^{\pi_e}\}_{t=0}^T)$ at a sufficiently fast rate so that the rate conditions in Assumption \ref{as:regularity} \ref{as:nuisance-est} are satisfied.

The following proposition establishes the $\sqrt{N}$-consistency and asymptotic normality of $\hat V^{\pi_e}_{{\rm DML}}$ and provides a consistent estimator for the asymptotic variance.
Below, $\rightsquigarrow$ denotes convergence in distribution, and $\stackrel{p}{\rightarrow}$ denotes convergence in probability.
\begin{proposition}\label{prop:step-DR}
	If Assumption \ref{as:regularity} holds, then
	\begin{align*}
	\sqrt{N}\sigma^{-1}(\hat V^{\pi_e}_{{\rm DML}}-V^{\pi_e})\rightsquigarrow N(0,1) \text{ and }
	\hat \sigma^2-\sigma^2 \stackrel{p}{\rightarrow} 0,
	\end{align*}
	where $\hat \sigma^2=\frac{1}{K}\sum_{k=1}^K \frac{1}{n}\sum_{i\in I_k}(\psi(H_i;\hat \eta_k)-\hat V^{\pi_e}_{{\rm DML}})^2$ is a sample analogue of $\sigma^2$.
\end{proposition}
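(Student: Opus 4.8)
The plan is to mirror the general argument for cross-fitted estimators built from Neyman-orthogonal scores in \citet{Chernozhukov2018}, specialized to the score $\psi$ of Eq.~(\ref{eq:DR}). Write $n=N/K$ and recall that $\mathbb{E}_{H\sim\pi_b}[\psi(H;\eta)]=V^{\pi_e}$ by Eq.~(\ref{eq:value}). The starting point is the exact identity
\begin{align*}
\sqrt{N}\bigl(\hat V^{\pi_e}_{{\rm DML}}-V^{\pi_e}\bigr)
=\frac{1}{\sqrt{N}}\sum_{i=1}^N\bigl(\psi(H_i;\eta)-V^{\pi_e}\bigr)+\frac{1}{\sqrt{K}}\sum_{k=1}^K(a_k+b_k),
\end{align*}
where, conditioning on $(H_j)_{j\in I_k^c}$ (which makes $\hat\eta_k$ nonrandom),
\begin{align*}
a_k&=\frac{1}{\sqrt n}\sum_{i\in I_k}\Bigl[\bigl(\psi(H_i;\hat\eta_k)-\psi(H_i;\eta)\bigr)-\mathbb{E}_{H\sim\pi_b}\bigl[\psi(H;\hat\eta_k)-\psi(H;\eta)\mid (H_j)_{j\in I_k^c}\bigr]\Bigr],\\
b_k&=\sqrt n\,\Bigl(\mathbb{E}_{H\sim\pi_b}\bigl[\psi(H;\hat\eta_k)\mid (H_j)_{j\in I_k^c}\bigr]-V^{\pi_e}\Bigr).
\end{align*}
The first sum is the ``oracle'' i.i.d.\ average of mean-zero terms, $a_k$ is a conditionally centered empirical-process term, and $b_k$ is the plug-in bias.

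First I would treat the oracle term: by Assumption~\ref{as:regularity}\ref{as:variance} ($\sigma^2\ge c_0>0$) and the $q$th-moment bound in Assumption~\ref{as:regularity}\ref{as:nuisance-est}\ref{as:moment} applied at $\tilde\eta=\eta\in{\mathcal T}_N$, the Lyapunov condition for a triangular array holds (its ratio is $O(N^{1-q/2})\to 0$ since $q>2$), so $\sigma^{-1}N^{-1/2}\sum_{i=1}^N(\psi(H_i;\eta)-V^{\pi_e})\rightsquigarrow N(0,1)$. Next, $a_k\stackrel{p}{\rightarrow}0$: conditionally on $(H_j)_{j\in I_k^c}$ the summands of $a_k$ are i.i.d.\ and mean zero, so its conditional variance is at most $\mathbb{E}_{H\sim\pi_b}[(\psi(H;\hat\eta_k)-\psi(H;\eta))^2\mid (H_j)_{j\in I_k^c}]$, which on the event $\{\hat\eta_k\in{\mathcal T}_N\}$ is bounded by $\sup_{\tilde\eta\in{\mathcal T}_N}\mathbb{E}_{H\sim\pi_b}[(\psi(H;\tilde\eta)-\psi(H;\eta))^2]=o(1)$ by Assumption~\ref{as:regularity}\ref{as:nuisance-est}\ref{as:dif}; since that event has probability approaching one by Assumption~\ref{as:regularity}\ref{as:nuisance-est}, a conditional Chebyshev inequality finishes it.

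For the bias $b_k$, on the event $\{\hat\eta_k\in{\mathcal T}_N\}$ I would expand $r\mapsto\mathbb{E}_{H\sim\pi_b}[\psi(H;\eta+r(\hat\eta_k-\eta))]$ to second order about $r=0$. The first-order term $\left.\tfrac{\partial}{\partial r}\mathbb{E}_{H\sim\pi_b}[\psi(H;\eta+r(\hat\eta_k-\eta))]\right|_{r=0}$ vanishes by the Neyman-orthogonality identity for $\psi$ established in Appendix~\ref{proof}, and the second-order remainder is bounded in absolute value by $\tfrac12\sup_{r\in(0,1),\tilde\eta\in{\mathcal T}_N}\bigl|\tfrac{\partial^2\mathbb{E}_{H\sim\pi_b}[\psi(H;\eta+r(\tilde\eta-\eta))]}{\partial r^2}\bigr|=o(1/\sqrt N)$ by Assumption~\ref{as:regularity}\ref{as:nuisance-est}\ref{as:second_d}; hence $|b_k|\le\sqrt n\cdot o(1/\sqrt N)=o(1/\sqrt K)=o(1)$ on this event, and Assumption~\ref{as:regularity}\ref{as:nuisance-est} then gives $b_k\stackrel{p}{\rightarrow}0$. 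Since $K$ is fixed, $\tfrac{1}{\sqrt K}\sum_{k=1}^K(a_k+b_k)\stackrel{p}{\rightarrow}0$, and Slutsky's lemma together with $\sigma\ge\sqrt{c_0}$ yields $\sqrt{N}\sigma^{-1}(\hat V^{\pi_e}_{{\rm DML}}-V^{\pi_e})\rightsquigarrow N(0,1)$.

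For the variance statement, write $\psi(H_i;\hat\eta_k)-\hat V^{\pi_e}_{{\rm DML}}=U_i+D_{ik}-W$ with $U_i=\psi(H_i;\eta)-V^{\pi_e}$, $D_{ik}=\psi(H_i;\hat\eta_k)-\psi(H_i;\eta)$ and $W=\hat V^{\pi_e}_{{\rm DML}}-V^{\pi_e}$, expand the square in $\hat\sigma^2=\frac1N\sum_{k=1}^K\sum_{i\in I_k}(U_i+D_{ik}-W)^2$, and control each piece: $\frac1N\sum_{i=1}^N U_i^2\stackrel{p}{\rightarrow}\sigma^2$ by a triangular-array law of large numbers (the $q$th-moment bound with $q>2$ makes $\{U_i^2\}$ uniformly integrable); $\frac1N\sum_{k=1}^K\sum_{i\in I_k}D_{ik}^2\stackrel{p}{\rightarrow}0$ by the conditioning argument used for $a_k$ together with Assumption~\ref{as:regularity}\ref{as:nuisance-est}\ref{as:dif}; $W^2=O_p(1/N)\stackrel{p}{\rightarrow}0$ by the first part; and all cross terms are $o_p(1)$ by the Cauchy--Schwarz inequality. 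This gives $\hat\sigma^2-\sigma^2\stackrel{p}{\rightarrow}0$. The \emph{main obstacle} is the bias step: it relies on the Neyman-orthogonality identity tailored to this RL score to annihilate the first-order term, and one must then check that the $o(1/\sqrt N)$ second-order remainder still vanishes after multiplication by $\sqrt n$ --- which works precisely because the number of folds $K$ is held fixed as $N\to\infty$ --- while ensuring the Taylor remainder is taken along a segment $\{\eta+r(\hat\eta_k-\eta):r\in(0,1)\}$ that lies inside ${\mathcal T}_N$, so the uniform bound of Assumption~\ref{as:regularity}\ref{as:nuisance-est}\ref{as:second_d} applies; the remaining work is routine bookkeeping to turn conditional-on-$I_k^c$ bounds into unconditional $o_p(1)$ statements while absorbing the probability-approaching-one event from Assumption~\ref{as:regularity}\ref{as:nuisance-est}.
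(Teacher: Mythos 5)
Your argument is correct and its bookkeeping checks out: the decomposition into the oracle sum plus $K^{-1/2}\sum_k(a_k+b_k)$ is exact, the Lyapunov ratio computation is right, and the conditional Chebyshev and Taylor steps are the standard way to kill $a_k$ and $b_k$. But you take the opposite route from the paper. You re-derive the general DML asymptotics from scratch, whereas the paper simply invokes Theorems 3.1 and 3.2 of \citet{Chernozhukov2018} for all of that machinery and spends its entire proof on the two facts you take as given, namely that $\mathbb{E}_{H\sim\pi_b}[\psi(H;\eta)]=V^{\pi_e}$ and that $\psi$ is Neyman orthogonal. Those two verifications are where the RL-specific content lives --- they require the importance-weighting identities of Lemmas \ref{lemma:step-IS:R}--\ref{lemma:step-IS:S} and the Bellman recursion (\ref{eq:Q_t}) to show that the derivative of the score in the direction of any $(\tilde\pi_b,\{\tilde q_t^{\pi_e}\}_{t=0}^T)$ telescopes to zero --- so by pointing to Eq.~(\ref{eq:value}) and to Appendix~\ref{proof} you are effectively using the paper's proof as a lemma while supplying the part the paper outsources. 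What your version buys is transparency: it makes explicit why $q>2$, $\sigma^2\ge c_0$, the $o(1)$ mean-square rate, the $o(1/\sqrt N)$ second-derivative bound, and fixed $K$ each enter, and it gives a self-contained argument for the consistency of $\hat\sigma^2$, which the paper also delegates to the cited theorems. A fully standalone proof would combine the two: your asymptotic expansion together with the paper's verification of the score identities.
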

The proof is an application of Theorems 3.1 and 3.2 of \citet{Chernozhukov2018}, found in Appendix \ref{proof}.
The above convergence result holds under any sequence of probability distributions $\{\mathbb{P}_N\}_{N\ge 1}$ as long as Assumption \ref{as:regularity} holds.
Therefore, our approach is usable, for example, in the case where there are a growing number of possible states, that is, $|{\mathcal S}|$ is increasing with $N$.

\begin{remark}
	It is possible to show that, if we impose an additional condition about the class of estimators of $\pi_b$ and $\{q_t^{\pi_e}\}_{t=0}^T$, the ``full-data'' variant of the DR estimator (the one that uses the whole sample both for estimating $\pi_b$ and $\{ q_t^{\pi_e}\}_{t=0}^T$ and for computing the estimate of $V^{\pi_e}$) has the same convergence properties as our estimator \cite[Theorems 6 \& 8]{kallus2019DRL}.
	Cross-fitting enables us to prove the desirable properties under milder conditions than those necessary without sample splitting.
\end{remark}

\begin{remark}
	For the ``half-data'' variant of the DR estimator (the one that uses a half of the data to estimate $\pi_b$ and $\{ q_t^{\pi_e}\}_{t=0}^T$ and the other half to compute the estimate of $V^{\pi_e}$), 
	a version of Proposition 1 in which $\sqrt{N}$ is replaced with $\sqrt{N/2}$ holds, since this method only uses a half of the data to construct the final estimate.
	This reduction in the data size leads to the variance of this estimator being roughly twice as large as that of the DML estimator, which formalizes the efficiency of our method.
\end{remark}

\subsection{Contextual Bandits as A Special Case}
When $T=0$, a trajectory takes the form of $H=(S_0,A_0,R_0)$.
Regarding $S_0$ as a context, it is possible to consider $\{H_i\}_{i=1}^N$ as batch data generated by a contextual bandit algorithm.
In this case, the DML estimator becomes
\begin{align*}
\hat V^{\pi_e}_{{\rm DML}} 
=\frac{1}{K}\sum_{k=1}^K \frac{1}{n}\sum_{i\in I_k} &\Bigl\{\frac{\pi_e(A_{i0}|S_{i0})}{\hat\pi_{b,k}(A_{i0}|S_{i0})} (R_{i0}-\hat\mu_k(S_{i0},A_{i0}))\Bigr.+\Bigl.\sum_{a\in\mathcal{A}}\pi_e(a|S_{i0})\hat\mu_k(S_{i0},a)\Bigr\},
\end{align*}
where $(\hat \pi_{b,k},\hat \mu_k)$ is the estimator of $(\pi_b,\mu)$ using the subset of data $\{H_i\}_{i\in I_k^c}$.
This estimator is the same as the DR estimator of \citet{Dudik2014} except that we use the cross fitting procedure.
Proposition 1 has the following implication for the contextual bandit case.
Let $\sigma^2_R(s,a)=\int (r-\mu(s,a))^2dP_R(r|s,a)$.
\begin{corollary}
	Suppose that $T=0$ and that Assumption 1 holds.
	Then,
	\begin{align*}
	    \sqrt{N}\sigma^{-1}_{CB}(\hat V^{\pi_e}_{{\rm DML}}-V^{\pi_e})\rightsquigarrow N(0,1),
	\end{align*}
	where
	\begin{align*}
	    \sigma^2_{CB}=\mathbb{E}_{S_0\sim P_{S_0}}&\Bigl[\sum_{a\in\mathcal{A}}\frac{\pi_e(a|S_0)^2}{\pi_b(a|S_0)}\sigma^2_R(S_0,a)\Bigr.+\Bigl.\Bigl(\sum_{a\in\mathcal{A}}\pi_e(a|S_0)\mu(S_0,a)-V^{\pi_e}\Bigr)^2\Bigr].
	\end{align*}
\end{corollary}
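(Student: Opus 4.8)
The plan is to obtain the corollary as a direct specialization of Proposition \ref{prop:step-DR} to $T=0$, so that the only real work is to evaluate the variance $\sigma^2=\mathbb{E}_{H\sim\pi_b}[(\psi(H;\eta)-V^{\pi_e})^2]$ appearing in that proposition in closed form and verify that it equals $\sigma^2_{CB}$. Since Assumption \ref{as:regularity} is assumed to hold, Proposition \ref{prop:step-DR} immediately yields $\sqrt{N}\,\sigma^{-1}(\hat V^{\pi_e}_{\rm DML}-V^{\pi_e})\rightsquigarrow N(0,1)$; substituting $\sigma^2=\sigma^2_{CB}$ into this statement gives the claim.

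First I would specialize $\psi$. When $T=0$, the recursion (\ref{eq:Q_T})--(\ref{eq:Q_t}) collapses to $q_0^{\pi_e}(s,a)=\mu(s,a)$, and using $\rho_{-1}^{\pi_e}=1$ together with $\rho_0^{\pi_e}(H_0^{s,a};\pi_b)=\pi_e(A_0|S_0)/\pi_b(A_0|S_0)$ the influence function in (\ref{eq:DR}) evaluated at the true $\eta$ becomes, after subtracting $V^{\pi_e}=\mathbb{E}_{S_0\sim P_{S_0}}[\sum_{a\in\mathcal{A}}\pi_e(a|S_0)\mu(S_0,a)]$,
$$\psi(H;\eta)-V^{\pi_e}=\underbrace{\frac{\pi_e(A_0|S_0)}{\pi_b(A_0|S_0)}\bigl(R_0-\mu(S_0,A_0)\bigr)}_{=:U}+\underbrace{\Bigl(\sum_{a\in\mathcal{A}}\pi_e(a|S_0)\mu(S_0,a)-V^{\pi_e}\Bigr)}_{=:W},$$
where $W$ depends on $S_0$ only and $U$ is an $(S_0,A_0)$-measurable weight times the reward residual $R_0-\mu(S_0,A_0)$. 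By Eq. (\ref{eq:value}) this has mean zero, so $\sigma^2=\mathbb{E}[(U+W)^2]=\mathbb{E}[U^2]+2\mathbb{E}[UW]+\mathbb{E}[W^2]$.

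Next I would evaluate the three terms by the law of iterated expectations, conditioning on $(S_0,A_0)$. In the cross term, the weight $\pi_e(A_0|S_0)/\pi_b(A_0|S_0)$ and $W$ are both $(S_0,A_0)$-measurable while $\mathbb{E}[R_0-\mu(S_0,A_0)\mid S_0,A_0]=0$ by definition of $\mu$, so $\mathbb{E}[UW]=0$. In $\mathbb{E}[U^2]$, conditioning on $(S_0,A_0)$ replaces $(R_0-\mu(S_0,A_0))^2$ by $\sigma_R^2(S_0,A_0)$, and then averaging over $A_0\sim\pi_b(\cdot\mid S_0)$ turns $\pi_b(a|S_0)\,(\pi_e(a|S_0)/\pi_b(a|S_0))^2$ into $\pi_e(a|S_0)^2/\pi_b(a|S_0)$, giving $\mathbb{E}_{S_0\sim P_{S_0}}[\sum_{a\in\mathcal{A}}\pi_e(a|S_0)^2\sigma_R^2(S_0,a)/\pi_b(a|S_0)]$; and $\mathbb{E}[W^2]=\mathbb{E}_{S_0\sim P_{S_0}}[(\sum_{a\in\mathcal{A}}\pi_e(a|S_0)\mu(S_0,a)-V^{\pi_e})^2]$. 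Summing the three pieces gives exactly $\sigma^2_{CB}$, which finishes the argument. There is no genuine obstacle — the proof is a conditional-expectation bookkeeping exercise — and the only point deserving care is the vanishing of the cross term, i.e. checking that every factor other than $R_0-\mu(S_0,A_0)$ in $U$ (and hence in $UW$) is measurable with respect to $(S_0,A_0)$, which is what lets the mean-zero property of the reward residual be invoked.
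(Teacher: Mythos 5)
Your proposal is correct and follows exactly the route the paper intends: the corollary is stated as a direct implication of Proposition \ref{prop:step-DR} for $T=0$, and the only substantive step is the variance computation $\sigma^2=\sigma^2_{CB}$, which you carry out correctly via the decomposition into the reward-residual term and the $S_0$-measurable term, with the cross term vanishing by conditioning on $(S_0,A_0)$. The paper gives no separate proof of the corollary, so there is nothing further to compare.
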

The variance expression coincides with the ``semiparametric efficiency bound'' obtained by \citet{narita2018efficient}, where 
the semiparametric efficiency bound is the smallest possible asymptotic variance among all consistent and asymptotically normal estimators. 
Hence $\hat V^\pi_{{\rm DML}}$ is the lowest variance estimator.

\section{Experiments}

\subsection{Simulation Experiment: CartPole-v0}
We demonstrate the effectiveness of our proposed estimator in the OpenAI Gym CartPole-v0 environment \cite{brockman2016openai}.
In this environment, the agent decides between two actions, moving the cart left or right, so that the pole attached to it stays upright.
We make the following two changes to this environment to make it a more high-dimensional setting.
First, the inputs to the agent in the original environment are 4 values representing the state (cart position, cart velocity, pole angle, and pole velocity at the tip), but we instead use a patch of the screen centered on the cart as an input, using Reinforcement Learning (DQN) Tutorial \cite{NEURIPS2019_9015}.\footnote{\url{https://pytorch.org/tutorials/intermediate/reinforcement_q_learning.html}}
We then represent the state ($S_t$) as the difference between the current screen patch and the previous one.
This leads to an environment with a larger number of state variables.
Second, at the beginning of every episode, we randomly select $10\%$ of pixels in the screen patch and black them out. We then move the pixels painted black horizontally over time.
In this environment, some pixels that have nothing to do with the movement of the cart and pole become black or white during the episode, and the state variables representing these pixels are completely irrelevant to the reward.
This mimics a situation where we do not know which among a large number of state variables are actually important, which makes the environment suitable for testing the performance of our method.

We use a greedy policy from a learned Q function as the evaluation policy ($\pi_e$) and an $\epsilon$-greedy policy as the behavior policy ($\pi_b$). 
Following Reinforcement Learning (DQN) Tutorial \cite{NEURIPS2019_9015}, we use a convolutional neural network to learn the Q function. 
Our convolutional network has the following structure: a convolution layer with 3 $16 \times 16$ filters and stride 2, followed by a $16 \times 16$ batch normalization layer, followed by a convolution layer with 16 $32 \times 32$ filters and stride 2, followed by a $32 \times 32$ batch normalization layer, followed by a convolution layer with 32 $32 \times 32$ filters and stride 2, followed by a $32 \times 32$ batch normalization layer, followed by a dense layer, and finally the binary output layer. All hidden layers employ ReLU activation functions.

Our experiment proceeds as follows. Firstly, we run the behavior policy to collect 100 trajectories and use them to estimate the expected value of the evaluation policy. Secondly, we run the evaluation policy to observe the ground truth value. We iterate this procedure 100 times and then report the mean squared error (MSE) as a performance metric.


We compare DML (our proposal) with four baseline methods: Direct Method (DM), also known as Regression Estimator, IPW, full-data DR, and half-data DR.
DM, IPW and full-data DR are given by
\begin{align*}
\hat V_{{\rm DM}}^{\pi_e}&=\frac{1}{N}\sum_{i=1}^N \sum_{a\in{\mathcal A}}\pi_e(a|S_{i0})\hat q_0^{\pi_e}(S_{i0},a),\\
\hat V_{{\rm IPW}}^{\pi_e}&=\frac{1}{N}\sum_{i=1}^N \psi_{{\rm IPW}}(H_i;\hat\pi_b),\\
\hat V_{{\rm DR-full}}^{\pi_e}&=\frac{1}{N}\sum_{i=1}^N \psi(H_i;\hat\pi_b, \{\hat q_t^{\pi_e}\}_{t=0}^T),
\end{align*}
where $\psi$ and $\psi_{{\rm IPW}}$ are defined in Eq. (\ref{eq:DR}) and Eq. (\ref{eq:IPW}), respectively, and $(\hat \pi_{b},\{\hat q_t^{\pi_e}\}_{t=0}^T)$ is the estimator of $(\pi_b,\{q_t^{\pi_e}\}_{t=0}^T)$ using the {\it whole} data.
Half-data DR is given by
$$
\hat V_{{\rm DR-half}}^{\pi_e}=\frac{1}{(N/2)}\sum_{i\in I_1} \psi(H_i;\hat\pi_{b,1}, \{\hat q_{b,1}^{\pi_e}\}_{t=0}^T),
$$
where $I_1$ is a random subset of trajectory indices $\{1,...,N\}$ of size $N/2$, and $(\hat \pi_{b,1},\{\hat q_{t,1}^{\pi_e}\}_{t=0}^T)$ is the estimator of $(\pi_b,\{q_t^{\pi_e}\}_{t=0}^T)$ using the other half of the data.
$\hat V_{{\rm DM}}^{\pi_e}$ and $\hat V^{\pi_e}_{{\rm IPW}}$ do not satisfy the Neyman orthogonality condition nor use cross-fitting. 
$\hat V^{\pi_e}_{{\rm DR-full}}$ and $\hat V^{\pi_e}_{{\rm DR-half}}$ satisfy the Neyman orthogonality condition but do not use cross-fitting. 
DML is therefore expected to outperform these four baseline methods.

We use a convolutional neural network to estimate $\pi_b$ and $\{q_t^{\pi_e}\}_{t=0}^T$.
We use the same model structure as the one used to construct the evaluation policy and behavior policy.
We set the learning rate to $0.0001$ for estimating the Q function and to $0.0002$ for estimating the behavior policy. The batch size is $128$ for both models.  

Note that for DML, we hold out a fold and use the rest of the data to obtain the estimators of $\pi_b$ and $\{q_t^{\pi_e}\}_{t=0}^T$. We use $K = 2$ folds.
We verify that increasing $K$ to 3 or 4 changes the results little.


\begin{table*}[t]\centering
		\caption{Comparing Off-Policy Evaluations: CartPole-v0}
		\label{noise-dqn-result}
		
		\begin{tabular}{l|rrrrrr}
			\hline
			\multicolumn{1}{l|}{} & \multicolumn{6}{c}{MSE}\\
			Methods & $\epsilon = 0.05$ & $\epsilon = 0.06$ & $\epsilon = 0.07$ & $\epsilon = 0.08$ & $\epsilon = 0.09$ & $\epsilon = 0.1$ \\ 
			\hline
			DM & 19.60 (1.96) & 23.32 (2.26) & 22.32 (2.49) & 20.51 (1.71) & 25.64 (2.22) & 28.01 (2.54) \\ 
			IPW & 300.71 (90.92) & 201.17 (66.55) & 215.15 (53.47) & 131.44 (47.14) & 170.31 (53.02) & 100.93 (33.2) \\ 
			DR-full & 32.03 (6.63) & 27.43 (6.28) & 23.25 (4.09) & 16.00 (2.47) & 14.44 (2.43) & 12.63 (1.98) \\ 
			DR-half & 8.7(1.31) & 12.18(1.72) & 11.84(1.38) & 9.5(1.15) & 12.53(1.35) & 13.47(1.73) \\ 
			DML & {\bf 5.31} (0.7) & {\bf 6.82} (0.74) & {\bf 6.47} (0.83) & {\bf 7.07} (0.79) & {\bf 9.69} (1.01) & {\bf 10.52} (1.22) \\ 			
			\hline
		\end{tabular}
\begin{flushleft}
		\fontsize{8.0pt}{10.0pt}\selectfont \textit{Notes}: This table shows MSEs of the predicted rewards of the evaluation policy compared to its actual reward. The standard errors of these MSEs are in parentheses. $\epsilon$ is the probablity that the $\epsilon$-greedy algorithm chooses an action randomly.
\end{flushleft}
\end{table*}

The MSEs of the five estimators are reported in Table \ref{noise-dqn-result}.
Regardless of the value of $\epsilon$ (the probability that the $\epsilon$-greedy algorithm chooses an action randomly), DML outperforms the other estimators.
Since the CartPole-v0 environment tends to have long trajectories, the importance weight (Eq. (\ref{eq:weight})) tends to be very volatile due to the curse of the horizon.
As a result, the MSE of IPW is much larger than the other four.
When $\epsilon$ is smaller, the performance of full-data DR also seems to be severely affected by the importance weight and is worse than DM.
As $\epsilon$ increases, this effect gets weaker, and full-data DR becomes better than DM.
Half-data DR performs much better than DM and IPW for any $\epsilon$ thanks to Neyman Orthogonality and sample splitting, but performs worse than DML due to the lack of cross-fitting.
This simulation result thus highlights the validity of the DML estimator even in the challenging situation where there are a large number of potentially important state variables.

\subsection{Real-World Experiment: Online Ads}

We apply our estimator to empirically evaluate the design of online advertisements.
This application uses proprietary data provided by CyberAgent Inc., which we described in the introduction.
This company uses bandit algorithms to determine the visual design of advertisements assigned to user impressions in a mobile game. 

Our data are logged data from a 7-day A/B test on ad ``campaigns,'' where each campaign randomly uses either a multi-armed bandit (MAB) algorithm or a contextual bandit (CB) algorithm for each user impression.
The parameters of both algorithms are updated every 6 hours.
As the A/B test lasts for 7 days, there are 28 batches in total. 
The parameters of both algorithms stay constant across impressions within each batch.

In the notation of our theoretical framework, $T$ is zero, and a trajectory takes the form of $H=(S_0,A_0,R_0)$.
Reward $R_0$ is a click while action $A_0$ is an advertisement design. The number of possible designs (the size of the action space) is around 55 and slightly differs depending on the campaign.
State (or context) $S_0$ is user and ad characteristics used by the algorithm.
$S_0$ is high dimensional and has tens of thousands of possible values. 

We utilize this data to examine the performance of our method.
For each campaign and batch, we regard the MAB algorithm as the behavior policy $\pi_b$ and the CB algorithm as the evaluation policy $\pi_e$.
We use the log data from users assigned to the MAB algorithm to estimate the expected value of the CB algorithm.
We also compute the actual click rate using the log data from users assigned to the CB algorithm, which serves as the ground truth.
We then compare the predicted value with the actual one. 

We consider the relative root mean squared error (relative-RMSE) as a performance metric.
Let $C$ and $B$ denote the number of campaigns ($C=4$) and the number of batches ($B=28$), respectively. Let $N_{c,b}$ denote the number of user impressions assigned to the CB algorithm for campaign $c$ in batch $b$. Let $\hat V^{\pi_e}_{c,b}$ and $\bar V^{\pi_e}_{c,b}$ be the estimated value and the actual value (the click rate) for campaign $c$ in batch $b$. 
We define relative-RMSE as follows:
\begin{align*}
\text{relative-RMSE}
&=\left(\tfrac{1}{\sum_{c=1}^C\sum_{b=1}^{B} N_{c,b}}\sum_{c=1}^C\sum_{b=1}^{B} N_{c,b}\left(\tfrac{\hat V^{\pi_e}_{c,b}-\bar V^{\pi_e}_{c,b}}{\bar V^{\pi_e}_{c,b}}\right)^2\right)^{1/2}.
\end{align*}
As the actual click rate $\bar V^{\pi_e}_{c,b}$ varies across campaigns and batches, we normalize the prediction error $\hat V^{\pi_e}_{c,b}-\bar V^{\pi_e}_{c,b}$ by dividing it by the actual value $\bar V^{\pi_e}_{c,b}$ to equally weight every campaign-batch combination.

\begin{table*}[t]\centering
		\caption{Comparing Off-Policy Evaluations: Online Ads}\label{ca-bandit-result}
		\begin{tabular}{l|c|c}
			\hline
			\multicolumn{1}{l|}{} & \multicolumn{1}{c|}{Estimated Behavior Policy} & \multicolumn{1}{c}{True Behavior Policy} \\
			Method & relative-RMSE & relative-RMSE  \\ 
			\hline
			IPW & 0.72531 (0.03077)  & 0.72907 (0.03061) \\ 
			DR-full  & 0.65470 (0.02900) & 0.65056 (0.02880)  \\ 
			DR-half  & 0.66331 (0.02726)& 0.67209 (0.02527)  \\ 
			DML  & {\bf 0.56623} (0.02952)  & {\bf 0.56196} (0.02922)  \\ 
			\hline 
			Sample size & \multicolumn{2}{c}{\# impressions assigned to the MAB algorithm = 40,101,050} \\ & \multicolumn{2}{c}{\# impressions assigned to the CB algorithm = 299,342} \\
			\hline 
		\end{tabular}
\begin{flushleft}
		\fontsize{8.0pt}{10.0pt}\selectfont \textit{Notes}: This table shows relative-RMSEs of the predicted CTRs of the evaluation policy compared to its actual CTR. The standard errors of these RMSEs are in parentheses.
\end{flushleft}
\end{table*}

We examine the performance of four estimators, IPW \cite{Strehl2010}, full-data DR, half-data DR, and DML (our proposal).
We use LightGBM \cite{ke2017lightgbm} to estimate the mean reward function $\mu$.
First, we split the data used for estimating $\mu$ into training and validation sets, train the reward model with the training set, and tune hyperparameters with the validation set.
We then use the tuned model and the original data to obtain the reward estimator.
We estimate the behavior policy $\pi_b$ by taking the empirical shares of ad designs in the log data.
We also try using the true behavior policy, computed by the Monte Carlo simulation from the beta distribution used in Thompson Sampling the MAB algorithm uses.
For DML, we use $K = 2$ folds to perform cross-fitting. We verify that increasing $K$ to 3 or 4 changes the results little.

We present our key empirical result in Table \ref{ca-bandit-result}. 
Regardless of whether the true behavior policy is available or not, DML outperforms IPW by more than 20 \% and outperforms both of the two variants of DR by more than 10 \% in terms of relative-RMSE.
These differences are statistically significant at 5\% level. This result supports the practical value of our proposed estimator. 

Finally, we use our method to measure the performance of a new counterfactual policy of choosing the ad design predicted to maximize the CTR. We obtain the counterfactual policy by training a click prediction model with LightGBM on the data from the first $B-1$ batches. In the training, we set the number of leaves to $20$ and the learning rate to $0.01$, and decide the number of boost rounds by cross-validation. We then use our proposed estimator to predict the performance of the counterfactual policy on the data from the last batch ($b=B$). The last batch contains three campaigns. The resulting off-policy evaluation results are reported in Figure \ref{table2}. The results show that the counterfactual policy performs better than the existing algorithms in two of the three campaigns, with a statistically significant improvement in one of the campaigns.

\begin{figure}[t]
	\includegraphics[width=8cm]{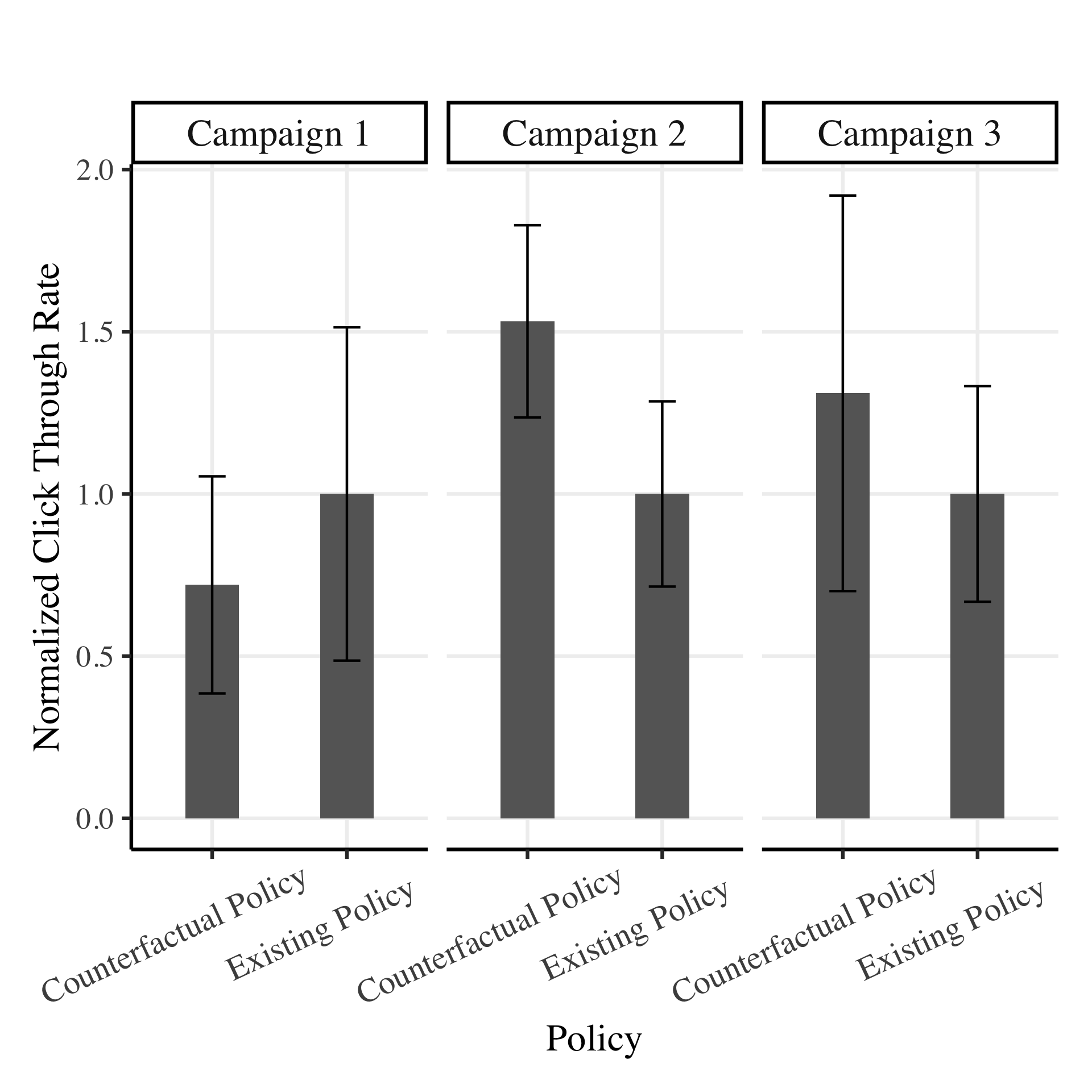}
	\caption{Improving Ad Design}\label{table2}
	\begin{flushleft}
		\fontsize{8.0pt}{10.0pt}\selectfont \textit{Notes}: This figure shows estimates of the expected CTRs of the logging policy and a counterfactual policy of choosing the best action for each context. Three panels correspond to three campaigns. CTRs are multiplied by a constant for confidentiality reasons. We obtain these estimates by our proposed estimator (the DML estimator using the estimated behavior policy). Bars indicate 95\% confidence intervals based on our asymptotic variance estimators developed in Proposition \ref{prop:step-DR}.
\end{flushleft}
\end{figure}

\section{Related Work}
In this section, we discuss the relationships between several most closely related papers and ours.
Our methodological building block is the \citet{Chernozhukov2018}'s DML framework for estimation and inference of semiparametric models.
Our paper is among the first to show the applicability of their framework to the problem of off-policy evaluation for RL and bandits.
In addition, we validate the practical importance of the two features of DML, cross-fitting and Neyman orthogonality, by both simulation and real world experiments.

Within the off-policy evaluation literature, our paper mostly builds on the work by \citet{Jiang16} and \citet{thomas2016data}, who propose DR estimators.
\citet{Jiang16} are the first to apply the DR technique to off-policy evaluation for RL.
\citet{thomas2016data} extend \citet{Jiang16}'s method to the settings where the time horizon is infinite and/or unknown, and propose a new estimator which combines a model-based estimator with a DR estimator.
As explained in Section 3, the DR estimators of these two papers and our DML are based on the same expression of $V^{\pi_e}$ (Eq. (\ref{eq:value})).
The difference between our estimator and theirs is that we use the cross-fitting procedure while they either use the full sample both for estimating $\pi_b$ and $\{ q_t^{\pi_e}\}_{t=0}^T$ and for computing the final estimate of $V^{\pi_e}$, or use a half of the data to construct estimates of $\pi_b$ and $\{ q_t^{\pi_e}\}_{t=0}^T$ and the other half to compute the estimate of $V^{\pi_e}$.
In their experiments, \citet{Jiang16} also implement cross-fitting as one variant of the DR estimator, which is the same as our DML estimator if we use the true behavior policy $\pi_b$.
However, we (i) allow the behavior policy to be unknown, (ii) allow for having a large number of potentially important state variables, and (iii) present statistical properties of the estimator while they do not.

The most closely related paper to ours is \citet{kallus2019DRL}, who also apply DML to off-policy evaluation for RL. 
Their work and our work are independent and simultaneous.
More substantively, there are several key differences between their paper and ours.
Empirically, their paper does not have an application to a real dataset.
In contrast, we show the estimator's practical performance in a real product setting. 
In addition, we provide a consistent estimator of the asymptotic variance, and allow for having a large number of potentially important state variables while they do not.
Methodologically, their estimator differs from ours in that they estimate the marginal distribution ratio $\frac{\Pr_{H\sim \pi_e}(S_t,A_t)}{\Pr_{H\sim \pi_b}(S_t,A_t)}$ and plug it in place of $\rho_t^{\pi_e}(H_t^{s,a};\pi_b)$ of our estimator.
There are tradeoffs between the two approaches.
Their estimator is better in that it has a smaller asymptotic variance.
In the important case with a known behavior policy, however, our estimator has the following advantages.
(1) Our approach only needs to estimate $\{q_t^{\pi_e}\}_{t=0}^T$, while their approach needs to estimate the marginal distribution ratio as well as $\{q_t^{\pi_e}\}_{t=0}^T$.
(2) Our estimator is $\sqrt{N}$-consistent and can be asymptotic normal even when the nonparametric estimator of $\{q_t^{\pi_e}\}_{t=0}^T$ does not converge to the true one, or when the parametric model of $\{q_t^{\pi_e}\}_{t=0}^T$ is misspecified.
In contrast, their estimator may not be even consistent in such a case unless the nonparametric estimator of the marginal distribution ratio is consistent, or the parametric model of the marginal distribution ratio is correctly specified.


\section{Conclusion} 

This paper proposes a new off-policy evaluation method, by marrying the DR estimator with Double/Debiased Machine Learning.
Our estimator has two features.
First, unlike the IPW estimator, it is robust to the bias in the estimates of the behavior policy and of the state-action value function (Neyman orthogonality).
Second, we use a sample-splitting procedure called cross-fitting.
This removes the overfitting bias that would arise without sample splitting but still makes full use of data, which makes our estimator better than DR estimators.
Theoretically, we show that our estimator is $\sqrt{N}$-consistent and asymptotically normal with a consistent variance estimator, thus allowing for correct statistical inference.
Our experiments show that our estimator outperforms the standard DR, IPW and DM estimators in terms of the mean squared error.
This result not only demonstrates the capability of our estimator to reduce prediction errors, but also suggests the more general possibility that 
the two features of DML (Neyman orthogonality and cross-fitting) may improve many variants of the DR estimator such as MAGIC \cite{Thomas16}, SWITCH \cite{wang2016optimal} and MRDR \cite{Farajtabar2018MoreRD}.

\bibliographystyle{ACM-Reference-Format}
\bibliography{reference}

\newpage

\section*{Appendices}

\appendix

\section{Examples}

The data generating process in our framework allows for many popular RL and bandit algorithms, as the following examples illustrate. 

\begin{example}[Deep $Q$ Learning]\label{ex:deepQ}
	In each round $t$, given state $s_t$, a Q Learning algorithm picks the best action based on the estimated Q-value of each actions, $Q(s, a)$, which estimates the expected cumulative reward from taking action $a$ (following the state and the policy). Choice probabilities can be determined with an $\epsilon$-greedy or soft-max rule, for instance. In the case where the soft-max rule is employed, the probability of taking each action is as follows:
	$$
	\pi(a|s_t) = \frac{ \exp ( Q(s_t,a)) }{\sum_{a' \in A} \exp ( Q(s_t,a')) }.
	$$
	Deep Q Learning algorithms estimate Q-value functions through deep learning methods.
	
\end{example}

\begin{example}[Actor Critic]\label{ex:AC}
	An Actor Critic is a hybrid method of value-based approach such as Q learning and policy-based method such as REINFORCE. This algorithm has two components called Actor and Critic. Critic estimates the value function and Actor updates the policy using the value of Critic. In each round t, we pick the best action according to the value of Actor with some probability. As in Deep $Q$ Learning algorithms, we can use $\epsilon$-greedy and soft-max for determining an action.
\end{example}

Contextual bandit algorithms are also important examples. 
When $T=0$, a trajectory takes the form of $H=(S_0,A_0,R_0)$.
Regarding $S_0$ as a context, it is possible to consider $\{H_i\}_{i=1}^N$ as batch data generated by a contextual bandit algorithm.
In additional examples below, the algorithms use past data to estimate the mean reward function $\mu$ and the reward variance function $\sigma^2_R$, where $\sigma^2_R(s,a)=\int (r-\mu(s,a))^2dP_R(r|s,a)$.
Let $\hat\mu$ and $\hat \sigma^2_R$ denote any given estimators of $\mu$ and $\sigma^2_R$, respectively.

\begin{example}[$\epsilon$-greedy]\label{ex:e-greedy}
	When the context is $s_0$, we choose the best action based on $\hat \mu(s_0,a)$ with probability $1-\epsilon$ and choose an action uniformly at random with probability $\epsilon$:
	$$
	\pi(a|s_0)= \begin{cases}
	1-\epsilon + \frac{\epsilon}{|\mathcal{A}|}& \ \ \ \text{if $a=\argmax_{a'\in \mathcal{A}}\hat \mu(s_0,a')$}\\
	\frac{\epsilon}{|\mathcal{A}|} & \ \ \ \text{otherwise}.
	\end{cases}
	$$
\end{example}

\begin{example}[Thompson Sampling using Gaussian priors]\label{ex:Thompson}
	When the context is $s_0$, we sample the potential reward $r_0(a)$ from the normal distribution $N(\hat \mu(s_0,a),\hat \sigma^2_R(s_0,a))$ for each action, and choose the action with the highest sampled potential reward, $\argmax_{a'\in \mathcal{A}}r_0(a')$.
	As a result, this algorithm chooses actions with the following probabilities:
	$$
	\pi(a|s_0) = \Pr(a=\argmax_{a'\in \mathcal{A}}r_0(a')),
	$$
	where $(r_0(a))_{a\in\mathcal{A}}\sim N(\hat \mu(s_0),\hat \Sigma(s_0))$, $\hat \mu(s_0)=(\hat \mu(s_0,a))_{a\in\mathcal{A}}$, and
	$\hat \Sigma(s_0)$ is the diagonal matrix whose diagonal entries are $(\sigma^2_R(s_0,a))_{a\in\mathcal{A}}$.
\end{example}

\section{Additional Figure}

We show some examples of visual ad designs in our real product application in Figure \ref{fig:ads_sample}. 

\begin{figure*}
	\begin{center}
		\includegraphics[width=0.8\linewidth]{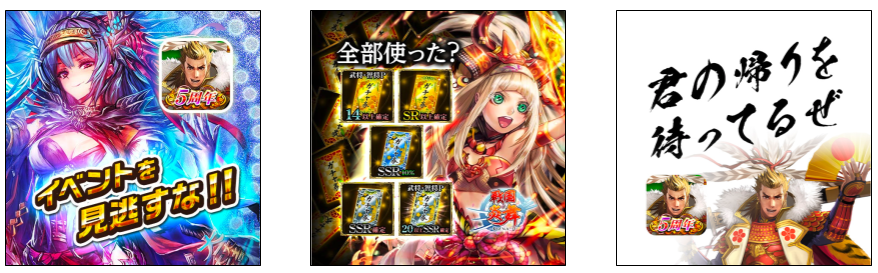}
		\caption{Examples of Visual Ad Designs}\label{fig:ads_sample}
	\end{center}
\end{figure*}

\section{Standard Error Calculations in the Online Ad Experiment}

We calculate the standard error (statistical uncertainty) of relative-RMSE by a bootstrap-like procedure.
This procedure is based on normal approximation of the distributions of $\hat V^{\pi_e}_{c,b}$ and $\bar V^{\pi_e}_{c,b}$: for each $(c,b)$, $\hat V^{\pi_e}_{c,b}\sim N(V^{\pi_e}_{c,b},\frac{\hat \sigma^{2,ope}_{c,b}}{N^{ope}_{c,b}})$ and $\bar V^{\pi_e}_{c,b}\sim N(V^{\pi_e}_{c,b},\frac{\hat \sigma^{2,online}_{c,b}}{N_{c,b}})$, where $V_{c,b}^{\pi_e}$ is the true value of policy $\pi_e$, $\hat \sigma^{2,ope}_{c,b}$ is the estimator for the asymptotic variance of $\hat V^{\pi_e}_{c,b}$ given in Proposition 1, $N^{ope}_{c,b}$ is the number of impressions used to estimate $\hat V^{\pi_e}_{c,b}$, and $\hat \sigma^{2,online}_{c,b}=\bar V^{\pi_e}_{c,b}(1-\bar V^{\pi_e}_{c,b})$ is the sample variance of the click indicator among the impressions assigned to the CB algorithm.

The standard error is computed as follows.
First, we compute $\hat\sigma^{2,ope}_{c,b}$ and $\hat\sigma^{2,online}_{c,b}$ for each $(c,b)$.
Second, we draw $\hat V^{\pi_e,sim}_{c,b}$ and $\bar V^{\pi_e,sim}_{c,b}$ independently from $N(\hat V^{\pi_e}_{c,b},\frac{\hat \sigma^{2,ope}_{c,b}}{N^{ope}_{c,b}})$ and $N(\hat V^{\pi_e}_{c,b},\frac{\hat \sigma^{2,online}_{c,b}}{N_{c,b}})$ for every $(c,b)$, and calculate the relative-RMSE using the draws $\{(\hat V^{\pi_e,sim}_{c,b},\bar V^{\pi_e,sim}_{c,b})\}_{c=1,...,C, b=1,...,B}$.
We then repeat the second step 100,000 times, and compute the standard deviation of the simulated relative-RMSEs.

\section{Lemmas}\label{lemma}


\begin{lemma}\label{lemma:step-IS:R}
	For $t=0,...,T$, $\mathbb{E}_{H\sim \pi_b}[\rho_t^{\pi_e}(H_t^{s,a})R_t]=\mathbb{E}_{H\sim \pi_e}[R_t]$.
\end{lemma}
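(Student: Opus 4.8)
The plan is to recognize $\rho_t^{\pi_e}(H_t^{s,a})$ as the likelihood ratio (Radon--Nikodym derivative) between the law of the state--action prefix $H_t^{s,a}=(S_0,A_0,\dots,S_t,A_t)$ under $\pi_e$ and its law under $\pi_b$, and then change measure. First I would write, for any policy $\pi$ and any realized prefix $h_t^{s,a}=(s_0,a_0,\dots,s_t,a_t)$, the generative factorization implied by the data-generating process of Section 2:
\[
\Pr{}_{\pi}(h_t^{s,a}) = P_{S_0}(s_0)\prod_{t'=0}^{t}\pi(a_{t'}|s_{t'})\prod_{t'=1}^{t}P_S(s_{t'}|s_{t'-1},a_{t'-1}).
\]
The factors $P_{S_0}$ and $P_S$ do not depend on the policy, so on the support of $\pi_b$ (using the standard overlap condition that $\pi_b(a|s)>0$ whenever $\pi_e(a|s)>0$ along reachable trajectories),
\[
\frac{\Pr{}_{\pi_e}(h_t^{s,a})}{\Pr{}_{\pi_b}(h_t^{s,a})} = \prod_{t'=0}^{t}\frac{\pi_e(a_{t'}|s_{t'})}{\pi_b(a_{t'}|s_{t'})} = \rho_t^{\pi_e}(h_t^{s,a}).
\]

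Second, I would reduce the reward $R_t$ to the mean reward function $\mu$. Since $\rho_t^{\pi_e}(H_t^{s,a})$ is a function of $H_t^{s,a}$ only, and conditional on $(S_t,A_t)$ the reward $R_t$ is drawn from $P_R(\cdot|S_t,A_t)$ with mean $\mu(S_t,A_t)$ under either policy, the law of iterated expectations (conditioning on $H_t^{s,a}$) gives $\mathbb{E}_{H\sim\pi_b}[\rho_t^{\pi_e}(H_t^{s,a})R_t] = \mathbb{E}_{H\sim\pi_b}[\rho_t^{\pi_e}(H_t^{s,a})\mu(S_t,A_t)]$, and likewise $\mathbb{E}_{H\sim\pi_e}[R_t]=\mathbb{E}_{H\sim\pi_e}[\mu(S_t,A_t)]$.

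Third, I would finish by expanding the $\pi_b$-expectation as a finite sum over prefixes (finite since $\mathcal S$ and $\mathcal A$ are finite) and substituting the likelihood-ratio identity:
\[
\mathbb{E}_{H\sim\pi_b}\!\left[\rho_t^{\pi_e}(H_t^{s,a})\mu(S_t,A_t)\right] = \sum_{h_t^{s,a}}\Pr{}_{\pi_b}(h_t^{s,a})\,\rho_t^{\pi_e}(h_t^{s,a})\mu(s_t,a_t) = \sum_{h_t^{s,a}}\Pr{}_{\pi_e}(h_t^{s,a})\mu(s_t,a_t) = \mathbb{E}_{H\sim\pi_e}[R_t].
\]
I expect the only subtlety --- hardly a real obstacle --- to be the bookkeeping around the overlap/positivity condition, which is what makes $\rho_t^{\pi_e}$ well defined and lets us drop the prefixes with $\Pr_{\pi_b}=0$ (which also satisfy $\Pr_{\pi_e}=0$) from the sum without changing either side; everything else is a routine change-of-measure computation exploiting the Markov factorization.
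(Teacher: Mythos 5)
Your proposal is correct and follows essentially the same route as the paper's proof: factorize the prefix probability $P^{\pi}(h_t^{s,a})$, observe that $P^{\pi_b}(h_t^{s,a})\rho_t^{\pi_e}(h_t^{s,a})=P^{\pi_e}(h_t^{s,a})$, reduce $R_t$ to $\mu(S_t,A_t)$ by iterated expectations, and change measure via a finite sum over prefixes. The only difference is that you make the overlap/positivity condition explicit, which the paper leaves implicit.
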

\begin{proof}
	Let $P^{\pi}(h_t^{s,a})$ denote the probability of observing trajectory $h_t^{s,a}=(s_0,a_0,...,s_t,a_t)$ when $H\sim \pi$ for some policy $\pi$.
	Under our data generating process, 
	$$
	P^{\pi}(h_0^{s,a})=P_{S_0}(s_0)\pi(a_0|s_0),
	$$
	and for $t\ge 1$,
	\begin{align*}
	P^{\pi}(h_t^{s,a})=&P_{S_0}(s_0)\pi(a_0|s_0)P_S(s_1|s_0,a_0)\times\cdots\times P_S(s_t|s_{t-1},a_{t-1})\pi(a_t|s_t).
	\end{align*}
	Hence, $P^{\pi_b}(h_t^{s,a})\rho_t^{\pi_e}(h_t^{s,a})=P^{\pi_e}(h_t^{s,a})$ for any $t=0,...,T$.
	We then have that
	\begin{align*}
    \mathbb{E}_{H\sim \pi_b}[\rho_t^{\pi_e}(H_t^{s,a})R_t]
	&=\mathbb{E}_{H\sim \pi_b}[\rho_t^{\pi_e}(H_t^{s,a})\mathbb{E}_{H\sim \pi_b}[R_t|H_t^{s,a}]]\\
	&=\mathbb{E}_{H\sim \pi_b}[\rho_t^{\pi_e}(H_t^{s,a})\mu(S_t,A_t)]\\
	&=\sum_{h_{t}^{s,a}}P^{\pi_b}(h_t^{s,a})\rho_t^{\pi_e}(h_t^{s,a})\mu(s_t,a_t)\\
	&=\sum_{h_{t}^{s,a}}P^{\pi_e}(h_t^{s,a})\mu(s_t,a_t)\\
	&=\mathbb{E}_{H\sim \pi_e}[\mu(S_t,A_t)]\\
	&=\mathbb{E}_{H\sim \pi_e}[\mathbb{E}_{H\sim \pi_e}[R_t|H_t^{s,a}]]\\
	&=\mathbb{E}_{H\sim \pi_e}[R_t],
	\end{align*}
	where we use $\mathbb{E}_{H\sim \pi_b}[R_t|H_t^{s,a}]=\mathbb{E}_{H\sim \pi_e}[R_t|H_t^{s,a}]=\mu(S_t,A_t)$, and we use $P^{\pi_b}(h_t^{s,a})\rho_t^{\pi_e}(h_t^{s,a})=P^{\pi_e}(h_t^{s,a})$ for the fourth equality.
\end{proof}

\begin{lemma}\label{lemma:step-IS}
	$\mathbb{E}_{H\sim \pi_b}[\sum_{t=0}^T\gamma^t\rho_t^{\pi_e}(H_t^{s,a})R_t]=V^{\pi_e}$.
\end{lemma}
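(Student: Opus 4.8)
The plan is to reduce the claim directly to Lemma \ref{lemma:step-IS:R} by linearity of expectation. Since $T$ is finite and fixed, the sum $\sum_{t=0}^T\gamma^t\rho_t^{\pi_e}(H_t^{s,a})R_t$ is a finite linear combination, so I may pull the expectation inside:
$$
\mathbb{E}_{H\sim \pi_b}\Bigl[\sum_{t=0}^T\gamma^t\rho_t^{\pi_e}(H_t^{s,a})R_t\Bigr]
=\sum_{t=0}^T\gamma^t\,\mathbb{E}_{H\sim \pi_b}\bigl[\rho_t^{\pi_e}(H_t^{s,a})R_t\bigr].
$$

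Next I would apply Lemma \ref{lemma:step-IS:R} term by term, which gives $\mathbb{E}_{H\sim \pi_b}[\rho_t^{\pi_e}(H_t^{s,a})R_t]=\mathbb{E}_{H\sim \pi_e}[R_t]$ for each $t=0,\dots,T$. Substituting and using linearity of expectation once more (now under $\pi_e$, again a finite sum):
$$
\sum_{t=0}^T\gamma^t\,\mathbb{E}_{H\sim \pi_e}[R_t]
=\mathbb{E}_{H\sim \pi_e}\Bigl[\sum_{t=0}^T\gamma^t R_t\Bigr]
=V^{\pi_e},
$$
where the last equality is the definition of $V^{\pi_e}$. This completes the argument.

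There is essentially no obstacle here: the statement is an immediate corollary of Lemma \ref{lemma:step-IS:R}, and the only thing worth a remark is that interchanging the (finite) sum over $t$ with the expectation is justified without any integrability hypothesis beyond what is already implicit in Lemma \ref{lemma:step-IS:R}, since $T$ is finite and fixed. If one wanted to be fully careful about finiteness of the expectations, one could note that each $\mathbb{E}_{H\sim \pi_e}[R_t]$ is finite because $\mu$ is a well-defined real-valued function on the finite set $\mathcal{S}\times\mathcal{A}$ and $\mathbb{E}_{H\sim\pi_e}[R_t]=\mathbb{E}_{H\sim\pi_e}[\mu(S_t,A_t)]$.
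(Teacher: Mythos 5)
Your proof is correct and follows exactly the paper's route: the paper likewise derives this lemma immediately from Lemma \ref{lemma:step-IS:R} by linearity of expectation and the definition of $V^{\pi_e}$; you have simply spelled out the (finite) interchange of sum and expectation that the paper leaves implicit.
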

\begin{proof}
	This immediately follows from Lemma \ref{lemma:step-IS:R} and the definition of $V^{\pi_e}$.
\end{proof}

\begin{lemma}\label{lemma:step-IS:S-A}
	For $t=0,...,T$ and for any measurable function $g_t:(\mathcal{S}\times\mathcal{A})^t\rightarrow \mathbb{R}$, $\mathbb{E}_{H\sim \pi_b}[\rho_t^{\pi_e}(H_t^{s,a})g_t(H_t^{s,a})]=\mathbb{E}_{H\sim \pi_e}[g_t(H_t^{s,a})]$.
\end{lemma}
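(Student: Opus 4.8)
The plan is to reuse the proof of Lemma~\ref{lemma:step-IS:R} almost verbatim; in fact the argument is shorter, because here there is no reward to integrate out — $g_t$ is already a deterministic function of the realized state--action trajectory. The only ingredient needed is the change-of-measure identity $P^{\pi_b}(h_t^{s,a})\,\rho_t^{\pi_e}(h_t^{s,a})=P^{\pi_e}(h_t^{s,a})$ for every $t=0,\dots,T$, which was established inside the proof of Lemma~\ref{lemma:step-IS:R} from the explicit product form of $P^{\pi}(h_t^{s,a})$ under the MDP data-generating process. (That identity implicitly carries an overlap condition: wherever $P^{\pi_e}(h_t^{s,a})>0$ we also have $P^{\pi_b}(h_t^{s,a})>0$, so $\rho_t^{\pi_e}$ is finite exactly where it is integrated against a positive $\pi_b$-mass.)

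Because $\mathcal{S}$ and $\mathcal{A}$ are finite, $H_t^{s,a}$ takes only finitely many values $h_t^{s,a}$, so every expectation below is a finite sum and no integrability subtlety arises (for definiteness one may take $g_t$ bounded, which is the only case used in the paper). First I would write the $\pi_b$-expectation as a sum over realized partial trajectories,
\[
\mathbb{E}_{H\sim \pi_b}\bigl[\rho_t^{\pi_e}(H_t^{s,a})\,g_t(H_t^{s,a})\bigr]
=\sum_{h_t^{s,a}}P^{\pi_b}(h_t^{s,a})\,\rho_t^{\pi_e}(h_t^{s,a})\,g_t(h_t^{s,a}),
\]
then substitute the identity $P^{\pi_b}(h_t^{s,a})\,\rho_t^{\pi_e}(h_t^{s,a})=P^{\pi_e}(h_t^{s,a})$ termwise, obtaining
\[
\sum_{h_t^{s,a}}P^{\pi_b}(h_t^{s,a})\,\rho_t^{\pi_e}(h_t^{s,a})\,g_t(h_t^{s,a})
=\sum_{h_t^{s,a}}P^{\pi_e}(h_t^{s,a})\,g_t(h_t^{s,a})
=\mathbb{E}_{H\sim \pi_e}\bigl[g_t(H_t^{s,a})\bigr],
\]
which is the asserted equality.

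I do not expect a genuine obstacle here: the whole content is the previously proved relation $P^{\pi_b}\rho_t^{\pi_e}=P^{\pi_e}$, and finiteness of the spaces removes any measure-theoretic worry. The only thing worth flagging is a notational one — the domain $(\mathcal{S}\times\mathcal{A})^t$ in the statement should be read as the set of realized state--action histories up through step $t$ (matching the argument $H_t^{s,a}$), and the overlap condition underlying $\rho_t^{\pi_e}$ must be in force, exactly as it already is in Lemma~\ref{lemma:step-IS:R}. Note also that Lemma~\ref{lemma:step-IS:R} itself is the special case $g_t(H_t^{s,a})=\mathbb{E}_{H\sim\pi_b}[R_t\mid H_t^{s,a}]=\mu(S_t,A_t)$ of this lemma.
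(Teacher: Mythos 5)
Your proof is correct and is essentially identical to the paper's: both write the expectation as a finite sum over partial trajectories and apply the change-of-measure identity $P^{\pi_b}(h_t^{s,a})\rho_t^{\pi_e}(h_t^{s,a})=P^{\pi_e}(h_t^{s,a})$ established in the proof of Lemma~\ref{lemma:step-IS:R}. Your added remarks on overlap and finiteness are sensible but not part of the paper's argument.
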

\begin{proof}
	We have that
	\begin{align*}
	\mathbb{E}_{H\sim \pi_b}[\rho_t^{\pi_e}(H_t^{s,a})g_t(H_t^{s,a})]
	&=\sum_{h_{t}^{s,a}}P^{\pi_b}(h_t^{s,a})\rho_t^{\pi_e}(h_t^{s,a})g(h_t^{s,a})\\
	&=\sum_{h_{t}^{s,a}}P^{\pi_e}(h_t^{s,a})g(h_t^{s,a})\\
	&=\mathbb{E}_{H\sim \pi_e}[g_t(H_t^{s,a})],
	\end{align*}
	where we use $P^{\pi_b}(h_t^{s,a})\rho_t^{\pi_e}(h_t^{s,a})=P^{\pi_e}(h_t^{s,a})$ for the second equality.
\end{proof}

\begin{lemma}\label{lemma:step-IS:S}
	For $t=1,...,T$ and for any measurable function $g_t:(\mathcal{S}\times\mathcal{A})^{t-1}\times \mathcal{S}\rightarrow \mathbb{R}$, $\mathbb{E}_{H\sim \pi_b}[\rho_{t-1}^{\pi_e}(H_{t-1}^{s,a})g_t(H_{t-1}^{s,a},S_t)]=\mathbb{E}_{H\sim \pi_e}[g_t(H_{t-1}^{s,a},S_t)]$.
\end{lemma}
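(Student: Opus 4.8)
The plan is to reduce Lemma \ref{lemma:step-IS:S} to the already-established Lemma \ref{lemma:step-IS:S-A} by integrating out the extra state variable $S_t$. First I would fix $t\in\{1,\dots,T\}$ and apply the law of iterated expectations, conditioning on $H_{t-1}^{s,a}$: since $\rho_{t-1}^{\pi_e}(H_{t-1}^{s,a})$ is a function of $H_{t-1}^{s,a}$ alone, it can be pulled outside the inner conditional expectation, leaving the factor $\mathbb{E}_{H\sim\pi_b}[g_t(H_{t-1}^{s,a},S_t)\,|\,H_{t-1}^{s,a}]$.

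The key observation is that the conditional law of $S_t$ given $H_{t-1}^{s,a}$ is governed solely by the transition kernel $P_S(\cdot\,|\,S_{t-1},A_{t-1})$, which does not depend on the policy. Hence, defining $\bar g_{t-1}(h_{t-1}^{s,a}) := \sum_{s_t\in\mathcal{S}} P_S(s_t\,|\,s_{t-1},a_{t-1})\, g_t(h_{t-1}^{s,a},s_t)$, this same function represents the conditional expectation of $g_t(H_{t-1}^{s,a},S_t)$ whether $H\sim\pi_b$ or $H\sim\pi_e$. Note that $\bar g_{t-1}$ is a measurable function of $H_{t-1}^{s,a}$ only, so it is an admissible test function for Lemma \ref{lemma:step-IS:S-A} applied with step index $t-1$.

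Applying Lemma \ref{lemma:step-IS:S-A} to $\bar g_{t-1}$ gives $\mathbb{E}_{H\sim\pi_b}[\rho_{t-1}^{\pi_e}(H_{t-1}^{s,a})\bar g_{t-1}(H_{t-1}^{s,a})] = \mathbb{E}_{H\sim\pi_e}[\bar g_{t-1}(H_{t-1}^{s,a})]$, and one more use of the law of iterated expectations on the right-hand side (reinserting $S_t$ under $\pi_e$) recovers $\mathbb{E}_{H\sim\pi_e}[g_t(H_{t-1}^{s,a},S_t)]$; chaining the three displays finishes the argument. An alternative, fully self-contained route would be to mimic the computation in the proof of Lemma \ref{lemma:step-IS:S-A} directly: expand $P^{\pi_b}(h_{t-1}^{s,a})P_S(s_t\,|\,s_{t-1},a_{t-1})$, use the identity $P^{\pi_b}(h_{t-1}^{s,a})\rho_{t-1}^{\pi_e}(h_{t-1}^{s,a})=P^{\pi_e}(h_{t-1}^{s,a})$ established there, and recognize $P^{\pi_e}(h_{t-1}^{s,a})P_S(s_t\,|\,s_{t-1},a_{t-1})$ as the probability of $(h_{t-1}^{s,a},s_t)$ under $\pi_e$.

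There is no genuine obstacle here — the statement is a routine corollary of the preceding lemma. The only points needing a sentence of care are (i) checking that after marginalizing out $S_t$ the resulting function $\bar g_{t-1}$ still has the domain required by Lemma \ref{lemma:step-IS:S-A}, and (ii) the boundary case $t=1$, where $H_{t-1}^{s,a}=H_0^{s,a}=(S_0,A_0)$ and $\rho_0^{\pi_e}$ is the single-step ratio $\pi_e(A_0|S_0)/\pi_b(A_0|S_0)$ — neither of which causes difficulty. The conceptual crux, as with the companion lemmas, is simply that importance reweighting up to step $t-1$ combined with the policy-independence of $P_S$ makes the $\pi_b$-expectation of anything measurable with respect to $(H_{t-1}^{s,a},S_t)$ coincide with its $\pi_e$-expectation.
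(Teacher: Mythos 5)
Your proposal is correct. Your primary route differs in organization from the paper's: the paper proves the lemma by a direct expansion of the expectation as a sum over $(h_{t-1}^{s,a},s_t)$ with weight $P^{\pi_b}(h_{t-1}^{s,a})P_S(s_t|s_{t-1},a_{t-1})$, and then invokes the density-ratio identity $P^{\pi_b}(h_{t-1}^{s,a})\rho_{t-1}^{\pi_e}(h_{t-1}^{s,a})=P^{\pi_e}(h_{t-1}^{s,a})$ — i.e., exactly the ``alternative, fully self-contained route'' you sketch at the end. Your main argument instead reduces the claim to Lemma \ref{lemma:step-IS:S-A} at index $t-1$ by conditioning on $H_{t-1}^{s,a}$ and marginalizing $S_t$ through the policy-independent kernel $P_S$, then reinserting $S_t$ under $\pi_e$. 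Both arguments rest on the same two facts (the density-ratio identity and the policy-independence of the transition), but your reduction buys a cleaner separation of concerns — it makes explicit that the only new ingredient beyond Lemma \ref{lemma:step-IS:S-A} is that the conditional law of $S_t$ given $H_{t-1}^{s,a}$ does not depend on the policy — at the cost of one extra appeal to the tower property on each side. The two points of care you flag (the domain of $\bar g_{t-1}$ and the $t=1$ boundary case) are indeed the only ones, and neither is problematic; your chain of equalities is complete and closes the proof.
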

\begin{proof}
	We have that
	\begin{align*}
	\mathbb{E}_{H\sim \pi_b}[\rho_{t-1}^{\pi_e}(H_{t-1}^{s,a})g_t(H_{t-1}^{s,a},S_t)]
	&=\sum_{(h_{t-1}^{s,a},s_t)}P^{\pi_b}(h_{t-1}^{s,a})P_S(s_t|s_{t-1},a_{t-1})\times\rho_{t-1}^{\pi_e}(h_{t-1}^{s,a})g(h_{t-1}^{s,a},s_t)\\
	&=\sum_{(h_{t-1}^{s,a},s_t)}P^{\pi_e}(h_{t-1}^{s,a})P_S(s_t|s_{t-1},a_{t-1})g(h_{t-1}^{s,a},s_t)\\
	&=\mathbb{E}_{H\sim \pi_e}[g_t(H_{t-1}^{s,a},S_t)],
	\end{align*}
	where we use $P^{\pi_b}(h_{t-1}^{s,a})\rho_{t-1}^{\pi_e}(h_{t-1}^{s,a})=P^{\pi_e}(h_{t-1}^{s,a})$ for the second equality.
\end{proof}

\section{Proof of Proposition 1}\label{proof}
We use Theorems 3.1 and 3.2 of \citet{Chernozhukov2018} for the proof.
We verify that $\mathbb{E}_{H\sim\pi_b}[\psi(H;\eta)]=V^{\pi_e}$ and that $\psi$ satisfies the Neyman orthogonality condition.
For the first part,
\begin{align*}
\mathbb{E}_{H\sim\pi_b}[\psi(H;\eta)]
&=\sum_{t=0}^T \gamma^t \mathbb{E}_{H\sim\pi_b}[\rho_t^{\pi_e}(H_t^{s,a})(R_t-q_t^{\pi_e}(S_t,A_t))+\rho_{t-1}^{\pi_e}(H_{t-1}^{s,a})\sum_{a\in\mathcal{A}}\pi_e(a|S_t)q_t^{\pi_e}(S_t,a)]\\
&=V^{\pi_e} -\sum_{t=0}^T \gamma^t \{\mathbb{E}_{H\sim\pi_e}[q_t^{\pi_e}(S_t,A_t)]-\mathbb{E}_{H\sim\pi_e}[\sum_{a\in\mathcal{A}}\pi_e(a|S_t)q_t^{\pi_e}(S_t,a)]\}\\
&=V^{\pi_e}-\sum_{t=0}^T \gamma^t \{\mathbb{E}_{H\sim\pi_e}[q_t^{\pi_e}(S_t,A_t)]\mathbb{E}_{H\sim\pi_e}[q_t^{\pi_e}(S_t,A_t)]\}\\
&=V^{\pi_e},
\end{align*}
where we use Lemmas \ref{lemma:step-IS}, \ref{lemma:step-IS:S-A} and \ref{lemma:step-IS:S} for the second equality.

We now show that $\psi$ satisfies the Neyman orthogonality condition.
Let
$$D\rho_t^{\pi_e}(H_t^{s,a})[\tilde \pi_b]\coloneqq \left. \frac{\partial \rho_t^{\pi_e}(H_t^{s,a};\pi_b+r(\tilde \pi_b-\pi_b))}{\partial r}\right|_{r=0}
$$
for any candidate $\tilde \pi_b$.
Note that by Lemmas \ref{lemma:step-IS:S-A} and \ref{lemma:step-IS:S},
\begin{align*}
&\mathbb{E}_{H\sim \pi_b}[-\rho_t^{\pi_e}(H_t^{s,a})\tilde q_t^{\pi_e}(S_t,A_t)+\rho_{t-1}^{\pi_e}(H_{t-1}^{s,a})\sum_{a\in\mathcal{A}}\pi_e(a|S_t)\tilde q_t^{\pi_e}(S_t,a)]\\
=~&\mathbb{E}_{H\sim \pi_e}[-\tilde q_t^{\pi_e}(S_t,A_t)+\sum_{a\in\mathcal{A}}\pi_e(a|S_t)\tilde q_t^{\pi_e}(S_t,a)]\\
=~&\mathbb{E}_{H\sim \pi_e}[-\tilde q_t^{\pi_e}(S_t,A_t)]+\mathbb{E}_{H\sim \pi_e}[\tilde q_t^{\pi_e}(S_t,A_t)]\\
=~&0
\end{align*}
for $t=0,...,T$.
We then have that for any candidate $\tilde \eta=(\tilde \pi_b,\{\tilde q_{t}^{\pi_e}\}_{t=0}^T)$,
\begin{align*}
&\left. \frac{\partial \mathbb{E}_{H\sim\pi_b}[\psi(H;\eta+r(\tilde \eta-\eta))]}{\partial r}\right|_{r=0}\\
=~&\sum_{t=0}^T \gamma^t\mathbb{E}_{H\sim \pi_b}[D\rho_t^{\pi_e}(H_t^{s,a})[\tilde \pi_b](R_t-q_t^{\pi_e}(S_t,A_t))-\rho_t^{\pi_e}(H_t^{s,a})\tilde q_t^{\pi_e}(S_t,A_t)\\
&+D\rho_{t-1}^{\pi_e}(H_{t-1}^{s,a})[\tilde \pi_b]\sum_{a\in\mathcal{A}}\pi_e(a|S_t)q_t^{\pi_e}(S_t,a)+\rho_{t-1}^{\pi_e}(H_{t-1}^{s,a})\sum_{a\in\mathcal{A}}\pi_e(a|S_t)\tilde q_t^{\pi_e}(S_t,a)]\\
=~&\sum_{t=0}^T \gamma^t\mathbb{E}_{H\sim \pi_b}[D\rho_t^{\pi_e}(H_t^{s,a})[\tilde \pi_b](R_t-q_t^{\pi_e}(S_t,A_t))+D\rho_{t-1}^{\pi_e}(H_{t-1}^{s,a})[\tilde \pi_b]\sum_{a\in\mathcal{A}}\pi_e(a|S_t)q_t^{\pi_e}(S_t,a)]\\
=~&\gamma^T\mathbb{E}_{H\sim \pi_b}[D\rho_T^{\pi_e}(H_T^{s,a})[\tilde \pi_b](R_T-q_T^{\pi_e}(S_T,A_T))]\\
&+\sum_{t=0}^{T-1} \gamma^t\mathbb{E}_{H\sim \pi_b}[D\rho_t^{\pi_e}(H_t^{s,a})[\tilde \pi_b](R_t-q_t^{\pi_e}(S_t,A_t)+\gamma\sum_{a\in\mathcal{A}}\pi_e(a|S_{t+1})q_{t+1}^{\pi_e}(S_{t+1},a))].
\end{align*}
Since $\mathbb{E}_{H\sim \pi_b}[R_T|H_T^{s,a}]=\mu(S_T,A_T)$ and $q_{T_e}^\pi(S_T,A_T)=\mu(S_T,A_T)$, the first term is zero by the law of iterated expectations.
The second term is also zero, since for $t=0,...,T-1$,
\begin{align*}
&\mathbb{E}_{H\sim \pi_b}[D\rho_t^{\pi_e}(H_t^{s,a})[\tilde \pi_b](R_t+\gamma\sum_{a\in\mathcal{A}}\pi_e(a|S_{t+1})q_{t+1}^{\pi_e}(S_{t+1},a))]\\
=~&\mathbb{E}_{H\sim \pi_b}[D\rho_t^{\pi_e}(H_t^{s,a})[\tilde \pi_b]\mathbb{E}_{H\sim \pi_b}[R_t+\gamma\sum_{a\in\mathcal{A}}\pi_e(a|S_{t+1})q_{t+1}^{\pi_e}(S_{t+1},a)|H_t^{s,a}]]\\
=~&\mathbb{E}_{H\sim \pi_b}[D\rho_t^{\pi_e}(H_t^{s,a})[\tilde \pi_b](\mu(S_t,A_t)+\gamma\sum_{s\in \mathcal{S}}P_S(s|S_t,A_t)\sum_{a\in\mathcal{A}}\pi_e(a|s)q_{t+1}^{\pi_e}(s,a))]\\
=~&\mathbb{E}_{H\sim \pi_b}[D\rho_t^{\pi_e}(H_t^{s,a})[\tilde\pi_b]q_t^{\pi_e}(S_t,A_t)],
\end{align*}
where we use the recursive formulation of $q_t^\pi$ for the last equality.

The convergence results then follow from Theorems 3.1 and 3.2 of \citet{Chernozhukov2018}.
\qed


\end{document}